\newtheorem{defi}{Definition}
\newtheorem{theo}{Theorem}
\newtheorem{propo}{Proposition}
\newtheorem{lemm}{Lemma}
\newtheorem{coro}{Corollary}
\newtheorem{exam}{Example}
\newenvironment{definition}{\begin{defi} \rm }{\end{defi}}
\newenvironment{theorem}{\begin{theo} \rm }{\end{theo}}
\newenvironment{lemma}{\begin{lemm} \rm }{\end{lemm}}
\newenvironment{proof}{\begin{trivlist} \item[\hspace{\labelsep}\bf Proof:]}{\end{trivlist}}
\newcommand{\M}{\mathcal{M}}
\begin{document}


\title{Verification of Logical Consistency in Robotic Reasoning}

\author{Hongyang~Qu}

\author{Sandor~M.~Veres} 


\affil{Department of Automatic Control and Systems
  Engineering\\  University of Sheffield Sheffield S1 3JD, United
  Kingdom\\
\{h.qu, s.veres\}@sheffield.ac.uk}

\maketitle



\begin{abstract}

Most autonomous robotic agents use logic inference to keep themselves
to safe and permitted behaviour. Given a set of rules, it is important
that the robot is able to establish the consistency between its rules,
its perception-based beliefs, its planned actions and their
consequences.  This paper investigates how a robotic agent can use
model checking to examine the consistency of its rules, beliefs and
actions. A rule set is modelled by a Boolean evolution system with
synchronous semantics, which can be translated into a labelled
transition system (LTS). It is proven that stability and consistency
can be formulated as computation tree logic (CTL) and linear temporal
logic (LTL) properties. Two new algorithms are presented to perform
realtime consistency and stability checks respectively. Their
implementation provides us a computational tool, which can form the
basis of efficient consistency checks on-board robots.

\end{abstract}





\section{Introduction} \label{sec:intro}

A robotic system's decision making is well known to be in need of some hard
decision making at times. A most popular example is Asimov's Laws
\cite{roblaws}, which demonstrate the difficulties to apply logic by robots in practice. A
shortened version of these laws is ``1. A robot may not allow a human
being to come to harm. 2. A robot must obey the orders given to it by
human beings except if the order causes harm to humans. 3. A robot
must protect its own existence as long as such protection does not cause
harm to humans.'' Assuming these, what would happen to the robot's
decision making if a human commands a robot to kill someone, but at the
same time threatens to kill himself if the robot does not obey? In
this example the human introduces a contradiction into the logic of
the robot. To avoid this the robot may have a complex rule base to
provide it with legal and ethical principles and can be equipped by a
meta law which says that ``the robot should not allow itself to be dictated
by communicated conditions which make  its logic contradictory''. In this example 
one could say that in legal terms the suicide will remain the sole
``responsibility'' of the threatening  person who commands the robot.

The problem is not only the imperfection of Asimov's robotic laws or
that an agent programmer can make mistakes. Logical consistency checks
are also needed when the robot's perception-based beliefs are wrong.
The agent can be programmed to re-examine whether its beliefs may need
to be changed as were mistakenly believed to be true or false. This is
not unlike enabling the agent to think like Poirot, Miss Marple or
Sherlock Holmes when they are reassessing their initial beliefs or
impressions. But there are simpler cases: a robot may decide that the
book it sees on the table cannot be Tom's as that one is in his
home. In this paper we address the problem of how a robot can quickly
and efficiently resolve inconsistencies in order to make the right
decisions.

The ability of making fast decisions about logical consistency, and
the robot's ability to detect when inconsistency occurs, is an
important problem for the future of robotics.  It is also of
particular importance for logic-based robot control systems, e.g.,
\cite{ABB05,MW04,SW01,SPBK08,TALF09,TBS06,Vranas08}.  A typical
logic-based robotic system usually contains a belief set, which
provides the basis of reasoning for a robot's
behaviour~\cite{MW04}. An inconsistent belief set could lead to a
wrong plan causing an unexpected result, e.g., an unmanned vehicle can
hit an obstacle, instead of avoiding it, if it mistakenly believes
that any route of avoidance could cause more damage, due to, for instance, mis-perception of the environment. 
Its mis perception could perhaps be corrected if it had been able to combine environmental prior knowledge 
with current sensing.

In a rapidly changing environment Bayesian methods can be used to
identify and track movements of objects and establish functional
relationships, e.g., \cite{MihaylovaCSGPG14}. When faced with balanced
probabilities for two hypothetical and competing relationships in the
robot's environment, it may need to make a decision based
on the application of logic using prior knowledge. Discovery of
logical inconsistency in geometrical and physical relationships in an
environmental model should prompt a robotic agent to revise its
perception model of the world. For instance belief-desire-intention (BDI)
agents should carry out consistency checks in their reasoning cycle in
languages such as $Jason$, $2APL$ and $Jade$
\cite{jason,2apl,jade,nlp12,review2011}. In these systems the agent
programmer should program logical consistency checks and handling of
inconsistencies at design stage of the software.

To topic of fast consistency checking by robots has also implications for legal certification of robots. 
As we humans formulate social
and legal behaviour rules in terms of logical implications, the process is likely to be similar for robots and the
problem of consistent decisions by robots 
 is  an important generic capability. Future legal frameworks for certification of
robots need to take into account verifiable decision making by robots.

Consistency checks on a set of logic rules in propositional logic is a
textbook problem and has been extended to various types of logic
systems in terms of validity, consistency and satisfiability.  For
instance \cite{sathist} provides an authoritative account of the
history of logical consistency checking in a propositional
logic. Relevant methods and algorithms have long been investigated for
database systems and rule-based expert systems, e.g., \cite{MA92}, but
none has been specifically designed for robotics. Query Language
4QL~\cite{4QL} and Boolean Networks (BN)~\cite{Kauffman69} are very
similar to our modelling formalism {\em Boolean evolution
  systems}. The former allows a variable to have four values: $true$,
$false$, $unknown$ and $inconsistent$. The algorithm that computes the
unique well-supported model in~\cite{4QL} can be adapted to check
consistency, but it can only deal with one initial evaluation of
variables at a time. BN was developed for modelling gene regulatory
networks in Biology. In BN, a Boolean variable can only take either
$true$ or $false$, while in our formalism, a variable can be
initialised as $unknown$. Research on BDI reasoning cycles focuses on
runtime detection and resolution of conflicting goals, such
as~\cite{Thangarajah02,MorrealeBFPCCCG06}. No work has been conducted
on complex reasoning process, which will be required by autonomous and
intelligent robots.

For realtime robotic systems it is important
to increase solver efficiency to be able to deal with large search
spaces with complex reasoning process for both
offline and online application.  
In this respect, the use of binary decision diagram (BDD) is
very effective by compressing search space through generating a unique
and succinct representation of a Boolean formula. BDD has been widely
adopted for model checking~\cite{cgp99}, and applied successfully to
verification of large systems. In this paper we adopt the BDD based
symbolic model checking approach~\cite{Burch+92a} to robotics. To our
best knowledge, nothing has been reported on its application on
consistency and stability checking of decisions by robots.

In this paper we propose a fast method for discovery of inconsistency
in a set of logic rules and statements on relationships in a current
world model, past actions, planned actions and behaviour rules of a
robotic agent.  We do not address the problem of how to resolve
logical inconsistency, mainly because we hold the view that, to
eliminate inconsistencies, a robot can efficiently improve its world
model by non-logic based techniques. Such techniques can include
gathering more perception data, active vision, using alternative
action plans or analyzing and deriving spatial temporal models using
probabilities.  If a single new perception predicate or predicate derived by logic rules of the robot
contradicts its otherwise consistent world model, then the robot may
apply a set of logic rules to derive a correction of 
its belief in terms of the predicate. What to derive and analyse for consistency is however a broad topic and lies outside of
the scope of this paper. Here we focus on fast discovery of
inconsistencies which is fundamental for safe operations of autonomous
robots.  With time it should be a key technical part in the process of
legal certification of future autonomous robots.

Our contribution builds on and develops our past efficient state space
generation and parallel computation~\cite{KLQ10} methods further. We
have previously developed various state space reduction techniques for
symbolic model checking via BDDs, such as symmetry
reduction~\cite{CDLQ09a,CDLQ09b} and abstraction~\cite{LQR10}.
The preliminary results of our techniques have been published
in~\cite{QV14}. In this paper we elucidate the setting for which our
techniques are designed and demonstrate their  way of using it
in robotics. We also extend the techniques to deal with a different
semantics and develop a new technique to extract
counterexamples efficiently when the system is inconsistent or
unstable. The counterexamples are useful for system developers to
correct robotic reasoning systems; they can provide guidance on how to improve  the
reasoning process of robots.

We study the efficiency of the agent's ability to examine the
consistency of its beliefs and logic rules and, if inconsistency
occurs, generate counterexamples to the rules which can then be used
by the robot to resolve inconsistency.  Our technique can be used
both by robot programmers at software design stage and by robots when
reasoning. In the former case, system developers can check the logical
consistency of reasoning cycles in agent programs at design stage. For
each inconsistent check, a counterexample can be produced to help
developers understand the source of inconsistency and correct the
program. In the latter case, consistency checks are carried out by the
robots themselves in realtime and counterexamples are examined to
improve reasoning, e.g., bringing in more sensor data to eliminate
ambiguity or bring about alternative decisions about future actions.

In Section~\ref{sec:system} we introduce the problem in a robotic
framework and its characteristics. In Section~\ref{boolevul} Boolean
evolution systems are formally represented. In
Section~\ref{sec:modelling}, we translate Boolean evolution systems
into \emph{transition systems} which are now widely used in the
control systems literature
\cite{TabuadaPappas06:TAC,KloetzerBelta08:TAC}, which provides the
basis of verification. Note that in this paper we abstract robotic
behaviour to propositional logic to be able to cope with computational
complexity of consistency checking. Section~\ref{sec:mc} contains our
results on stability of Boolean evolution systems in terms of CTL and
LTL formulae.  An important result states that stability checking can
be reduced to a reachability problem which only asks for one fixpoint
computation. Similarly, consistency checking can be also converted
into simple fixpoint computation. Section~\ref{sec:case} presents a
case study in a home robotics scenario, which demonstrates the use of
uncertain sensory and communication information and a set of rules to
satisfy. In Section~\ref{sec:exp}, performance comparison between CTL formulae based solutions
and the reachability based algorithms is highlighted and implemented
in the symbolic model checker MCMAS~\cite{MCMAS}. We discuss
stability checking under an alternative semantics of evolution in
Section~\ref{sec:interleaving}. 
We conclude the paper in
Section~\ref{sec:concl}.

\section{Perception clarification and robot logic} \label{sec:system}

Our predicates-based knowledge representation of a robot, which is
derived from sensing events, remembering the past as well as from prediction of a future
environment, is schematically depicted in Fig. \ref{robpreds}. For new
sensory predicates we assume that the robot is able to identify which
are uncertain in a probabilistic sense. The following specific
problems are to be addressed:
\begin{enumerate}
\item Assuming inconsistency occurs, identify which uncertain new
  sensing predicates in $ U_t\subseteq B_t$ can be made certain within
  rules $R^P$ based on physical models.
\item The agent considers a set $A_t\subseteq B_t$ of actions as its options. For
  each action $\alpha_k$ in $A_t$ it simulates a physical model
  over a time horizon and abstracts a set of events $F_t$ for its
  future consequences.
\item It checks if $F_t\subseteq B_t$ and its behaviour rules
  $R^B$ are consistent based on 1) and 2).
\item The set $P_t\subseteq A_t$ of feasible actions $\alpha_k$ in $A_t$ , which are consistent with
  $R^B$, are used by the robot to make a final choice of an action using non-logic based evaluations (for instance
  using planning).
\end{enumerate}

\subsection{Discovering inconsistency}

In Fig.~\ref{robpreds} the diamonds indicate the procedural locations of logical consistency checks, based on 
predicates and sets of rules (logical implications). 
\begin{figure} [h!t!]
\centering{\includegraphics[scale=0.9]{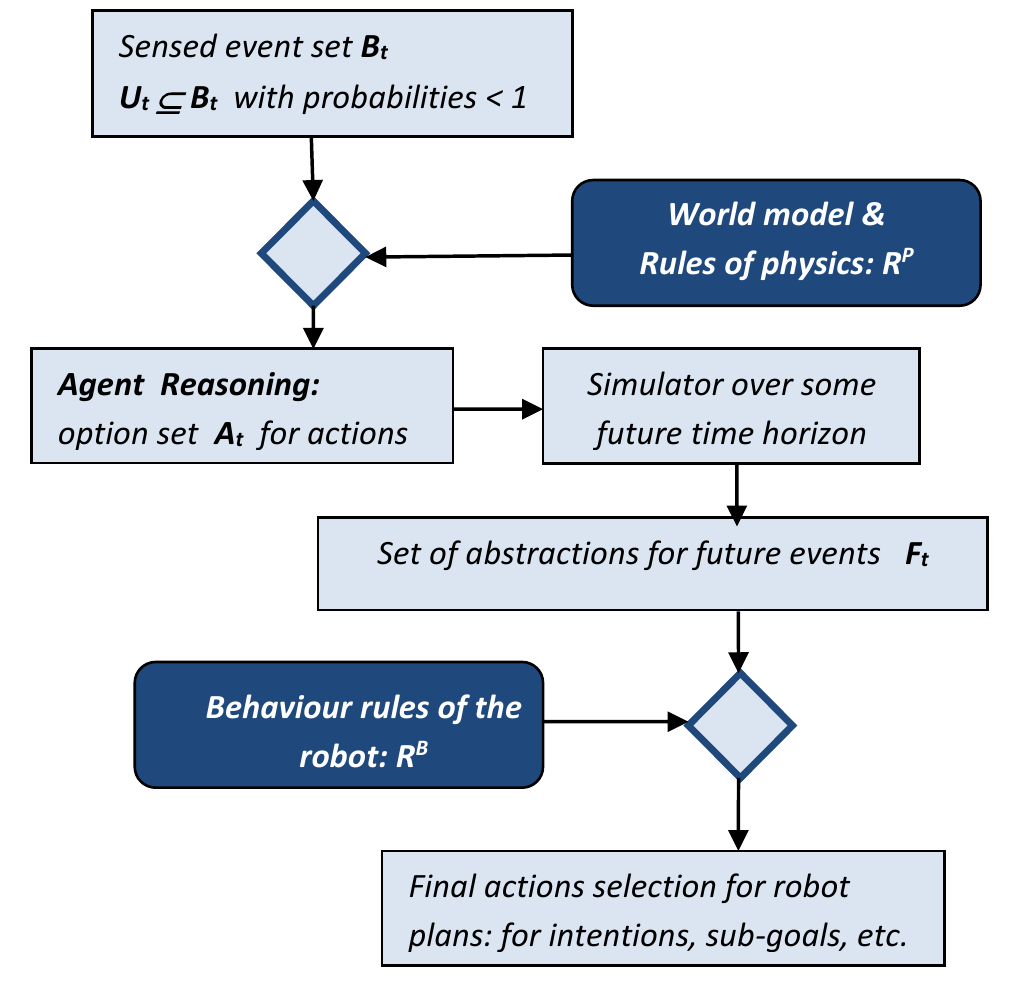}}
\caption{Types of predicates in a robot's reasoning at time \emph{t}. }
\label{robpreds}
\end{figure}
It can however happen that some of the probabilistic sensing of events
remain unresolved based on physical models and associated rules: let
$D_t\subseteq U$ denote the set of undecided perceptions. The robotic
agent needs to check for each of its possible actions what would
happen if various combinations of its uncertain perceptions in $D_t$
were true or false. In safety critical situations a robot cannot take
any action, which could lead to it breaking its rules in some
combination of truth values in $D_t$. Checking this can require
complex consistency checking to be done while the robot interacts with
its environment, hence the efficient methods proposed in this paper
are key to timely decisions by a robot.

This paper is not committed to any particular
type of software architecture. We assume that propositional logic using
a predicate system, which can admit arguments but is equivalent to
propositional logic (for decidability properties), is used in the
robotic software. We also assume that the robot perceives and creates
predicates about environmental events and about its actions periodically
within a \emph{reasoning cycle} performed at an approximately
fixed rate per second.
    
At a given reasoning cycle of the robotic agent, indexed with time $t$, the agent holds a set of predicates
$\mathcal{B}_t\subset \mathcal{B}$ in its memory,   possibly some of these with
negation signs. This means that the predicates in $\mathcal{B}_t$
split into two disjoint sets as $\mathcal{B}_t= \mathcal{B}^{true}_t
\cup \mathcal{B}^{false}_t$  consisting of ones assigned value $true$ while the
rest the Boolean value $false$. Such an assignment of Boolean values
in $\mathcal{B}_t$ is called a valuation of the Boolean variables in
$\mathcal{B}_t$ and denoted by $\overline{\mathcal{B}}_t$. The agent
also has a set of rules at time $t$ denoted by $\mathcal{R}_t=\{r^t_1,
\cdots, r^t_m\}$. The rule set $\mathcal{R}_t$ may contain more
variables than $\mathcal{B}_t$. Those not in $\mathcal{B}_t$ are
unknown to the agent and its logic reasoning is then interested in the
problem of satisfiability of all the logic rules by suitable
assignments to the unknown variables. In the following we will drop
the time index $t$ as we will be interested in the consistency of
logic rules at any time, in view of some Boolean evaluations. The
terms ``variable'' and ``predicate'' will be used interchangeably. Our
primary problem is that the robotic agent has limited time for logical
derivations, when quick response is required, and it needs
to assess the following:
\begin{itemize}
\item[(1)] Are its current evaluations and its rule base consistent in the sense that unknown variables can take on values to satisfy all the rules? 
\item[(2)] Having answered the previous question negatively, can it modify some of its own Boolean evaluations so that its set of predicates becomes consistent with its set of rules?
\end{itemize}

Testing consistency of a set of evaluations can be achieved by checking satisfiability of the conjunction of the evaluations and the rule set,
and obtaining consistent values for unknown variables can be done by
starting to apply the rules until the Boolean evaluation becomes
stable, i.e. the logical value of no variable changes any more. However,
it can be inefficient to use this method as the
number of evaluations may increase exponentially with the number of
variables. 

\subsection{An example of robot reasoning}

By analogy to previous definitions~\cite{lincoln2013,wooldridge2002,veres2011} of AgentSpeak-like architectures for belief-desrie-intention type of robotic agents, we define our reasoning system by a tuple:
\begin{equation}
\label{eq:agent}
\mathcal{R}=\{ \mathcal{F},B,L,\Pi,A\}
\end{equation}
where:
\begin{itemize}
\item 
	$\mathcal{F} = \{p_1,p_2,\ldots,p_{n_p}\}$ is the set of all predicates.
\item
	$B \subset \mathcal{F}$ is the total atomic belief set. The current belief base at time $t$ is defined as $B_t \subset B$. At time $t$ beliefs that are added, deleted or modified are considered \emph{events} and are included in the set $E_t \subset B$, which is called the \emph{Event set}. Events can be either \emph{internal} or \emph{external} depending on whether they are generated from an internal action, in which case are referred to as ``mental notes'', or an external input, in which case are called ``percepts''. 
\item
	$L = R^P\cup R^B= \{l_1,l_2,\ldots\,l_{n_l}\}$ is a set of implication rules.
\item
	$\Pi = \{\pi_1,\pi_2,\ldots,\pi_{n_\pi}\}$ is the set of executable plans or \emph{plans library}. Current applicable plans at time $t$ are part of the subset $\Pi_t \subset \Pi$, this set is also named the \emph{Desire set}. A set $I \subset \Pi$ of intentions is also defined, which contains plans that the agent is committed to execute. 
\item 
	$A = \{a_1,a_2,\ldots,a_{n_a}\} \subset \mathcal{F} \setminus B$ is a set of all available actions. Actions can be either \emph{internal}, when they modify the belief base or generate internal events, or \emph{external}, when they are linked to external functions that operate in the environment.
\end{itemize}

AgentSpeak-like languages, including LISA (Limited Instruction Set Architecture) \cite{ecc16lisa,taros16}, can be fully defined and implemented by listing the following characteristics:
\begin{itemize}
\item \emph{Initial Beliefs}.\\
	The initial beliefs and goals $B_0 \subset \mathcal{F}$ are a set of literals that are automatically copied into the \emph{belief base} $B_t$ (that is the set of current beliefs) when the agent mind is first run.
\item \emph{Initial Actions}.\\
	The initial actions $A_0 \subset A$ are a set of actions that are executed when the agent mind is first run. The actions are generally goals that activate specific plans.
\item \emph{Logic rules}.\\
	A set of logic based implication rules $L=R^P\cup R^B$ describes \emph{theoretical} reasoning about physics and about behaviour rules to redefine the robot's current knowledge about the world and influence its decision on what action to take.
\item \emph{Executable plans}.\\
	A set of \emph{executable plans} or \emph{plan library} $\Pi$. Each plan $\pi_j$ is described in the form:
	\begin{equation}
		p_j : c_j \leftarrow a_1, a_2, \ldots, a_{n_j}
	\end{equation}
	where $p_j \in P_t$ is a \emph{triggering predicate} obtained by consistency in $ U_t \cup F_t  \cup P_t \subset B_t$ and possible valuation for the best choice of  $p_j$ from $P_t$.  Next the $p_j \in P_t$  allows the plan to be retrieved from the plan library whenever it becomes true; $c_j \in B$ is called the \emph{context}, which allows the agent to check the state of the world, described by the current belief set $B_t$, before applying a particular plan;  the $a_1, a_2, \ldots, a_{n_j} \in A$ form a list of actions to be executed.
\end{itemize}
The above list of steps are cyclically repeated to run the reasoning process of a robotic agent.

\section{Boolean evolution systems}
\label{boolevul}

A binary-decision-diagram (BDD) \cite{Bryant-bdd} is a succinct 
representation of a set of Boolean evaluations and, motivated by this,
we examine the possibility of applying symbolic model
checking via BDDs to verify consistency and stability. This way, we
avoid the combinatorial explosion of evaluations. We will show that
BDD based model checking is very efficient for this task to be
carried out in realtime, while the agent needs to give quick responses
to its environment. As agent perception processes 
are often prone to errors in a physical world due to sensor issues or to
unfavourable environmental conditions, this is an important problem of
robotic systems. We present a
formal definition of the consistency checking problems in the next section.  

\begin{definition}[Boolean evolution system]
A Boolean evolution system $BES = \langle \mathcal{B},\mathcal{R}
\rangle$ is composed of a set of Boolean variables
$\mathcal{B}=\{b_1,\cdots, b_n\}$ and a set of evolution rules
$\mathcal{R} = \{r_1, \cdots, r_m\}$ defined over $\mathcal{B}$. A rule
$r_i$ is of the form $g \rightarrow X$, where $g$ is the guard, i.e., a Boolean formula over
$\mathcal{B}$, and $X$ is an assignment that assigns $true$ (``1'') or
$false$ (``0'') to
a Boolean variable $b\in \mathcal{B}$.  For simplicity, we write a rule
of the form $g \rightarrow b:=true$ as $g \rightarrow b$, and write $g
\rightarrow b:=false$ as $g \rightarrow \neg b$. We also group rules
with the same guard into one. For example, 
 two rules $g
\rightarrow b$ and $g \rightarrow c$ can be written as $g \rightarrow b \land c$.
\end{definition}

In practice, the set $\mathcal{B}$ is usually partitioned into 2
subsets: $\mathcal{B}^{known}$ and
$\mathcal{B}^{unknown}$, where variables in the former are
initialized to either $true$ or $false$, and variables in the latter
initialized to $unknown$. Accordingly, the guard
of a rule can be evaluated to $true$, $false$ and $unknown$. The last
case can occur when the guard contains a variable in
$\mathcal{B}^{unknown}$. 

To model a predicates-based knowledge representation and reasoning
system in Fig.~\ref{robpreds} by a BES, we translate each predicate in $B_t$,
action in $A_t$ and future event in $F_t$ into a Boolean variable and
each reasoning rule in $R^P \cup R^B$ into a Boolean formula. In
particular, the uncertain sensing predicates in $U_t\subseteq B_t$
and future events in $F_t$ are
placed in $\mathcal{B}^{unknown}$, and those in $B_t\setminus U_t$ and
actions in $A_t$ are placed in
$\mathcal{B}^{known}$. 

Let $\overline{\mathcal{B}}$ be a valuation of the Boolean variables,
and $\overline{\mathcal{B}}(b)$ the value of variable $b$ in
$\overline{\mathcal{B}}$. 
%
We say that a rule $r\in \mathcal{R}$ is {\em enabled} if
its guard $g$ is evaluated to $true$ on $\overline{\mathcal{B}}$.
The new valuation, after applying the evolution rules  to
$\overline{\mathcal{B}}$, is defined by   {\em synchronous evolution
  semantics} as follows. 

\begin{definition}[Synchronous evolution semantics]
  Let $\mathcal{R}|_{\overline{\mathcal{B}}}\subseteq \mathcal{R}$ be
  the set of rules that are enabled. The new valuation
  $\overline{\mathcal{B}}'$ is the result of simultaneously applying
  all rules in $\mathcal{R}|_{\overline{\mathcal{B}}}$ to
  $\overline{\mathcal{B}}$. That is, every value of $b$ in
  $\overline{\mathcal{B}}'$ is defined as follows.
\begin{equation*} \label{eqn:synchronous}
\overline{\mathcal{B}}'(b) = \left\{
 \begin{array}{l l}
   true & \quad \text{if there exists a rule $g \rightarrow b$ in
     $\mathcal{R}|_{\overline{\mathcal{B}}}$,}\\
   false & \quad \text{if there exists a rule $g \rightarrow \neg b$ in
     $\mathcal{R}|_{\overline{\mathcal{B}}}$,}\\
   \overline{\mathcal{B}}(b) & \quad \text{otherwise.}
 \end{array} \right.
\end{equation*}

\end{definition}

The evolution from $\overline{\mathcal{B}}$ to $\overline{\mathcal{B}}'$
is written as $\overline{\mathcal{B}} \longrightarrow
\overline{\mathcal{B}}'$. We assume that for each valuation, there
exists a non-empty set of enabled rules. 

\begin{definition}[Stability]
A Boolean evolution system is
{\em stable} if from any valuation and applying the rules recursively,
it eventually reaches a valuation $\overline{\mathcal{B}}$ where no
other valuation can be obtained, i.e., $\overline{\mathcal{B}}'
=\overline{\mathcal{B}}$. We say that $\overline{\mathcal{B}}$ is a
{\em stable} valuation, written as $\overline{\mathcal{B}}_s$. 
\end{definition}
Whether
stability happens is decidable by the agent: it requires that two
consecutive steps in the evolution have identical valuations.

\begin{definition}[Inconsistency] \label{def:incon}
Three problems might occur during evolution of a BES: 
\begin{enumerate}
\item
two enabled rules try to update the same Boolean variable with
opposite values at some time;
\item
a variable in $\mathcal{B}^{known}$ is updated to the opposite value
of its initial value at some time.
\item
a variable in $\mathcal{B}^{unknown}$ is updated to the opposite value at some time
after its value has been determined\footnote{The third problem is
  different from the second one because the variables in
  $\mathcal{B}^{unknown}$ are initially set to $unknown$, which can
  be overwritten using the evolution rules.}.
\end{enumerate}
If any of these problem happens, we say that the system is {\em
  inconsistent}. Otherwise, the system is {\em consistent}.
\end{definition}

These problems should
be identified when robotic software is programmed. For instance 
belief-desire-intention rational agent implementations  
apply the logic rules in each reasoning cycle in $Jason$, $2APL$ and
$Jade$ \cite{jason,2apl,jade}. Within one
reasoning cycle, where the input to the variables in
$\mathcal{B}^{known}$ is kept constant. This justifies the second and third problems in Definition~\ref{def:incon}.

{\bf Example 1.}
\begin{equation*}
\begin{split}
 \mathtt{a} & \rightarrow \neg \mathtt{b} \land \mathtt{c} \\
 \neg \mathtt{b} & \rightarrow \neg \mathtt{c}
\end{split}
\end{equation*}

This example demonstrates the inconsistency under synchronous
semantics. For the initial valuation $a=true\land b=c=unknown$, both the first and
second rules are enabled, which makes $b=false$ and $c=true$. In the
next evolution iteration, the second rule sets $c$ to $true$, while
the third one sets $c$ to $false$. Fig.~\ref{example1-cex} illustrates
the evaluation in these evolution iterations.
\begin{figure}[h!]
\centering{\includegraphics[scale=0.7]{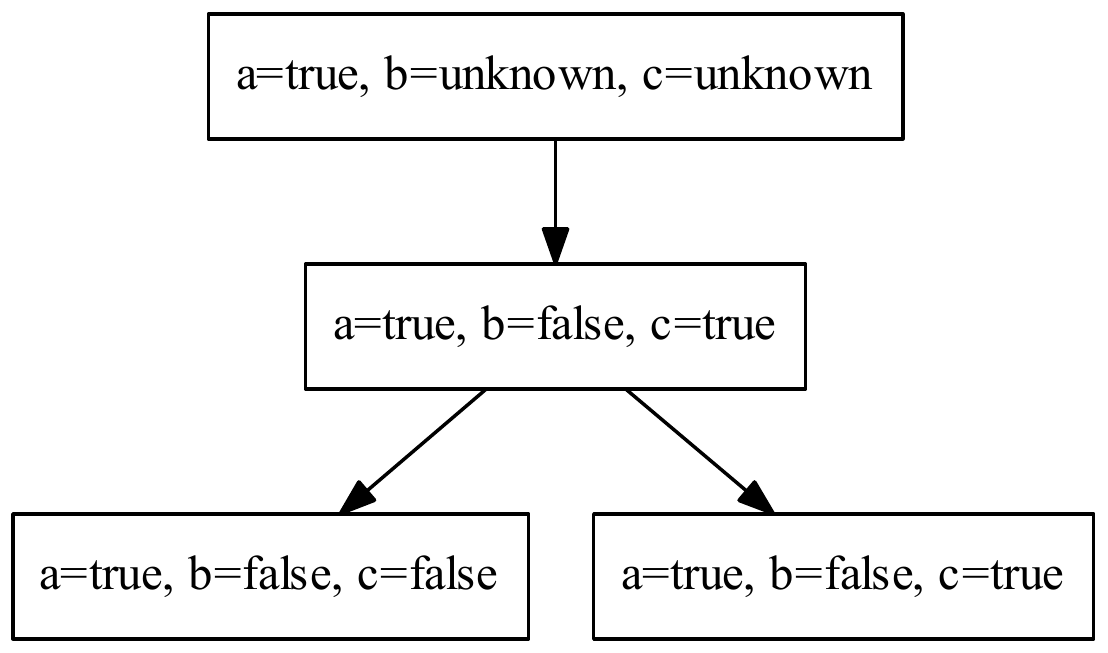}}
\caption{The evolution process showing inconsistency.}
\label{example1-cex}
\end{figure}

The following result can be used to provide a simple algorithm to
solve problem (1) of the agent.

 \begin{theorem} \label{theorem:simple}
   Let $\overline{\mathcal{B}}$ be a Boolean evaluations of 
   variables in the rule set $\mathcal{R}$. Then the following hold.

  If the Boolean evolution system is not stable then
  $\overline{\mathcal{B}}$ and $\mathcal{R}$ are inconsistent which
  the agent can detect from the same evaluation reoccurring during the
  Boolean evolution.
\end{theorem}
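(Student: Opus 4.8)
The plan is to combine determinism of the synchronous evolution with finiteness of the valuation space. First, if at some step of the evolution starting from $\overline{\mathcal{B}}$ two enabled rules try to assign a variable opposite values, then problem~(1) of Definition~\ref{def:incon} already makes the system inconsistent and there is nothing to prove; so assume this never happens, in which case each step $\overline{\mathcal{B}}_k \longrightarrow \overline{\mathcal{B}}_{k+1}$ is a well-defined deterministic map and, starting from $\overline{\mathcal{B}}_0 = \overline{\mathcal{B}}$, we obtain a unique infinite sequence $\overline{\mathcal{B}}_0 \longrightarrow \overline{\mathcal{B}}_1 \longrightarrow \overline{\mathcal{B}}_2 \longrightarrow \cdots$ of valuations. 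Since there are only finitely many (at most $3^{|\mathcal{B}|}$) valuations, the pigeonhole principle yields indices $p < \ell$ with $\overline{\mathcal{B}}_\ell = \overline{\mathcal{B}}_p$, and by determinism $\overline{\mathcal{B}}_{k+d} = \overline{\mathcal{B}}_k$ for all $k \ge p$, where $d = \ell - p \ge 1$. This recurrence is exactly what the agent observes by recording the valuations it has visited and watching for a repeat, which therefore occurs within at most $3^{|\mathcal{B}|}$ steps.

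Next I would bring in non-stability. Since the evolution from $\overline{\mathcal{B}}$ does not reach a stable valuation, consecutive valuations always differ; in particular $\overline{\mathcal{B}}_{p+1} \ne \overline{\mathcal{B}}_p$, so $d \ge 2$. Hence some variable $b$ satisfies $\overline{\mathcal{B}}_p(b) \ne \overline{\mathcal{B}}_{p+1}(b)$. A small auxiliary observation is needed here: under the synchronous semantics a variable is never assigned the value $unknown$ --- a rule assigns only $true$ or $false$, and otherwise the previous value is retained --- so once a variable holds a Boolean value it keeps one ever after. Since $\overline{\mathcal{B}}_p$ lies on the cycle (it cannot have reverted from a Boolean value to $unknown$), $b$ holds a Boolean value at both $p$ and $p+1$, so the change at step $p$ is a genuine flip between $true$ and $false$, and it is effected by an enabled rule $g \rightarrow b$ or $g \rightarrow \neg b$.

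It then remains to match this flip against Definition~\ref{def:incon}. If $b \in \mathcal{B}^{unknown}$, then at step $p$ its value has already been determined (it is $true$ or $false$, not $unknown$) and at step $p+1$ it is overwritten by the opposite value: this is exactly problem~(3). If instead $b \in \mathcal{B}^{known}$, write $w = \overline{\mathcal{B}}_0(b)$ for its initial value; along $\overline{\mathcal{B}}_0, \dots, \overline{\mathcal{B}}_{p+1}$ the value of $b$ passes from $w$ to $\overline{\mathcal{B}}_p(b)$ and then to the opposite of $\overline{\mathcal{B}}_p(b)$, so at least one of these two latter values differs from $w$, and at the first step at which $b$ acquires a value different from $w$ it is updated to $\neg w$ by a rule (again because values are otherwise retained) --- this is problem~(2). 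In either case $\overline{\mathcal{B}}$ and $\mathcal{R}$ are inconsistent, and the inconsistency is exposed precisely by the reoccurring evaluation found in the first paragraph. I expect the only real obstacle to be the bookkeeping of this last step: converting ``$b$ takes the opposite value somewhere on the cycle'' into ``$b$ is updated to the opposite value by a rule at a concrete step'', and keeping the $\mathcal{B}^{known}$ case (compared against the initial value) cleanly separated from the $\mathcal{B}^{unknown}$ case (compared against the first determined value); the finiteness and pigeonhole argument itself is routine.
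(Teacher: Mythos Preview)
Your proposal is correct and follows essentially the same approach as the paper: use finiteness of the valuation space to obtain a recurring valuation, observe that non-stability forces a variable to change along the resulting cycle, and conclude that this change realises one of the inconsistency conditions in Definition~\ref{def:incon}. Your treatment is in fact considerably more careful than the paper's own short argument: you make explicit the determinism assumption (by first disposing of case~(1)), the pigeonhole step, the impossibility of reverting to $unknown$ on the cycle, and the case split between $\mathcal{B}^{known}$ and $\mathcal{B}^{unknown}$ mapped to problems~(2) and~(3) respectively --- details the paper leaves implicit.
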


\begin{proof}
  If the evolution is not stable, then during the transition steps
  between a recurrence of the same evaluation, some evaluations
  must be different as otherwise the evolution would be stable with
  the evaluation that occurred at two consecutive identical
  evaluations. As an evaluation reoccurs, this means that some
  variable values in that evaluation are changed between the two
  identical occurrences of valuations. Let $a$ be such a variable. The
  logic rules applied, which led to the recurrence of
  the evaluation, have in fact forced $a$ at least once to change
  to its opposite value and later change back. This means that the
  rule set $\mathcal{R}$ for the initial evaluation is inconsistent
  with the rules, i.e. $\mathcal{R}$ is not satisfiable by any
  evaluation which is consistent with the initial evaluation
  $\overline{\mathcal{B}}$. 
\end{proof}

Theorem~\ref{theorem:simple} shows that stability is guaranteed in
consistent systems. For certain systems, however, the inconsistency
conditions in Definition~\ref{def:incon} are considered unnecessarily
strict in that the initial value of {\em known} variables may not be
obtained directly from the environment. Hence, these values can sometimes be incorrect. 
To alleviate this problem, the second and third inconsistency
condition in Definition~\ref{def:incon} can be relaxed. Using this principle, 
we say that the second and the third conditions are
{\em solvable} if the system eventually reaches a stable state by
ignoring these two conditions. This principle makes consistency
and stability checking not straightforward any more:  some rules can correct the evaluations 
of some predicates. 
\vspace{3mm}

{\bf Example 2.}

\begin{equation*}
\begin{split}
 \mathtt{a} &\rightarrow \mathtt{b} \land \mathtt{d}\\
 \mathtt{b} \land \mathtt{d} &\rightarrow \neg \mathtt{c} \land \neg \mathtt{a}\\
 \neg \mathtt{c} \land \mathtt{d} &\rightarrow \neg \mathtt{b}\\
 \neg \mathtt{b} \land \mathtt{d} &\rightarrow \mathtt{c}\\
 \mathtt{c} \land \mathtt{d} &\rightarrow  \mathtt{b}\\
 \mathtt{b} \land \mathtt{c} &\rightarrow \neg \mathtt{d}
\end{split}
\end{equation*}

This example shows a consistent and stable system, where
$\mathcal{B}^{known}=\{a\}$ and $\mathcal{B}^{unknown}=\{b, c,
d\}$. We use a sequence of `$0$', `$1$' and `$?$' to represent states. For
example, the initial state `$0???$' represents $a=false\land b=unknown
\land c=unknown \land d=unknown$.
\begin{itemize}
\item
For valuation $a=false$, the evolution is 
$0???\longrightarrow 0??? \longrightarrow \cdots$.
\item
For valuation $a=true$, we have
$1???\longrightarrow 11?1 \longrightarrow 0101 \longrightarrow 0001 
\longrightarrow 0011 \longrightarrow 0111 \longrightarrow 0100 
\longrightarrow 0100 \longrightarrow \cdots$.
\end{itemize}

\section{Modelling Boolean evolution systems} \label{sec:modelling}


In this section, we describe how to enable model checking to deal with
Boolean evolution systems. First, we introduce {\em transition
  systems}, which are a mathematical formalism that forms the basis of
model checking. Second, we present an example of encoding a Boolean
evolution system under the semantics of transition systems using an
input language of a model checker.

\subsection{Transition systems} \label{subsec:ts}
Model checking is usually performed on transition systems. Here we
present the definition of transition systems and the translation of a
Boolean evolution system into a transition system.

\begin{definition}[Transition system]
A transition system $\mathcal{M}$ is a tuple $\langle S, S_0, T, A$, $H
\rangle$ such that
\begin{itemize}
\item
$S$ is a finite set of states;
\item
$S_0\subseteq S$ is a set of initial states;
\item
$T\subseteq S\times S$ is the transition relation;
\item
$A$ is a set of atomic propositions;
 \item
$H: S\rightarrow 2^{A}$ is a labelling function mapping states
  to the set of atomic propositions $A$. We denote the set of atomic
  propositions valid in state $s$ by $H(s)$.
\end{itemize}
\end{definition}

Let $\overline{S}\subseteq S$ be a set of states. The function
$Image(\overline{S}, T)$ computes the successor states of
$\overline{S}$ under $T$. Formally,
\[Image(\overline{S}, T)=\{s\in S\mid \exists s' \in \overline{S}
\mbox { such that } (s' ,s)\in T\}.\]

Given a Boolean evolution system $BES=\langle \mathcal{B},
\mathcal{R}\rangle$ with $n_1$ unknown variables, i.e.,
$\mathcal{B}^{unknown}=\{b_1, \cdots, b_{n_1}\}$ and $n_2$ known
variables, i.e., $\mathcal{B}^{known}=\{b_{n_1+1}, \cdots,
b_{n_1+n_2}\}$, let $A= \{\mathtt{B}_1, \cdots,
\mathtt{B}_{n_1},\mathtt{B}_{n_1+1}, \cdots, \mathtt{B}_{n_1+n_2}\}$
$\cup\{\mathtt{D}_1, \cdots, \mathtt{D}_{n_1},$ $\mathtt{D}_{n_1+1},
\cdots,\mathtt{D}_{n_1+n_2}\}\cup\{\mathtt{K}_{n_1+1}, \cdots,
\mathtt{K}_{n_1+n_2}\}$, where $\mathtt{B}_i$ is an atomic proposition
representing that a variable $b_i\in \mathcal{B}^{unknown}\cup
\mathcal{B}^{known}$ is $true$, $\mathtt{D}_i$ representing that $b_i$
is $false$, and $\mathtt{K}_j$ representing that an unknown variable
$b_j\in \mathcal{B}^{unknown}$ has value $unknown$. A transition
system (TS) $\mathcal{M}$ can be generated from $BES$ as follows.
\begin{enumerate}
\item
$S$ is composed of all $3^{n_1}\times 2^{n_2}$ valuation of
  $\mathcal{B}$. 
\item 
$S_0$ is composed of $2^{n_2}$ valuations, where variables in
  $\mathcal{B}^{known}$ can take either $true$ or $false$, and
  variables in $\mathcal{B}^{unknown}$ take $unknown$.
\item A transition $(\overline{\mathcal{B}}, \overline{\mathcal{B}}')
  \in T$ iff $(\overline{\mathcal{B}} \longrightarrow
  \overline{\mathcal{B}}')$. In the presence of inconsistent update of
  some Boolean variables, the successor valuation is chosen randomly,
  which results in multiple successor states. For example,
  consider a valuation $s$ from where a Boolean variable $a$ can be
  updated to $true$ by rule $r_1$ and $false$ by rule $r_2$. In the
  transition system, $s$ has two successor states, i.e, valuation: one
  state contains $a=true$ and the other contains $a=false$. If there
  are $k$ Boolean variables that are updated inconsistently in $s$,
  then $s$ has $2^k$ successor states.
\item 
$H(\overline{\mathcal{B}})$ is defined such that for each variable
  $b_i\in \mathcal{B}^{unknown}\cup \mathcal{B}^{known}$,
  $\mathtt{B}_i\in H(\overline{\mathcal{B}})$ iff $b_i$ is evaluated
  as $true$, $\mathtt{D}_i\in H(\overline{\mathcal{B}})$ iff $b_i$ is evaluated
  as $false$, and for each variable $b_j\in \mathcal{B}^{unknown}$,
  $\mathtt{K}_j\in H(\overline{\mathcal{B}})$ iff $b_j$ is evaluated
  to $true$.
\end{enumerate}
Note that all possible input values of variables in
$\mathcal{B}^{unknown}$ are captured by $S_0$, i.e., each possible
valuation of $\mathcal{B}^{unknown}$ is encoded into an initial state
in $S_0$.

The set of states and the transition relation in a transition system
can be illustrated as a direct graph, where a transition $(s_1,
s_2)\in T$ is represented by an arrow from $s_1$ to
$s_2$. Fig.~\ref{example1-lts} shows the directed graph for Example 1
in Section~\ref{sec:system}.
\begin{figure}[h!]
\centering{\includegraphics[scale=0.7]{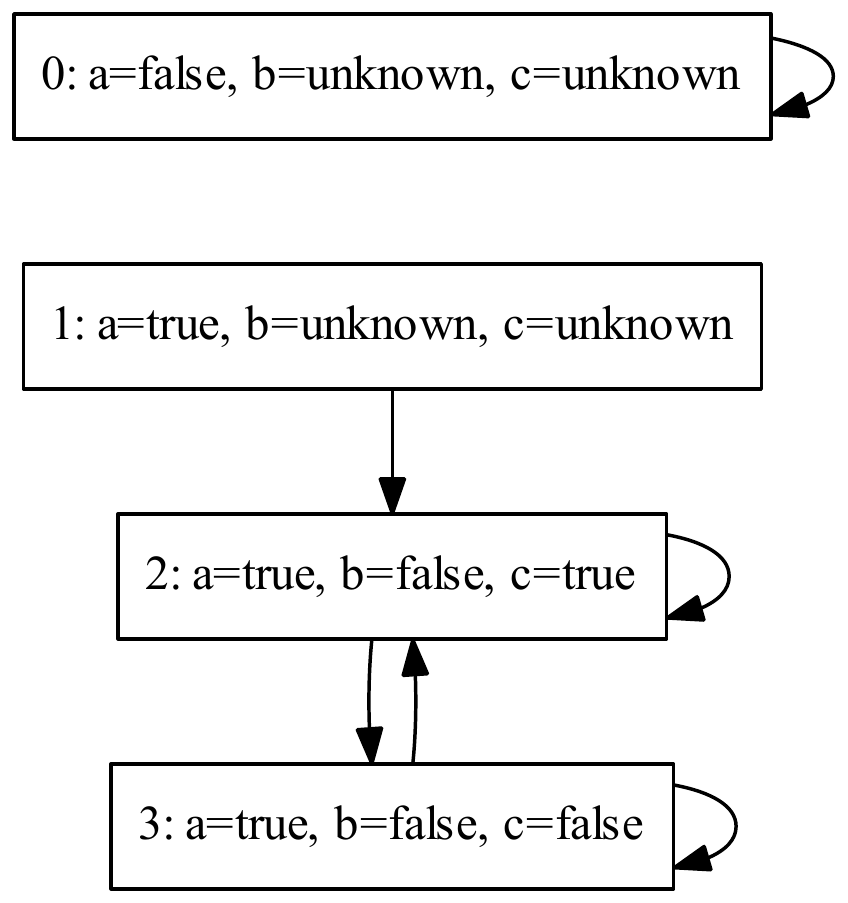}}
\caption{The transition system for Example 1.}
\label{example1-lts}
\end{figure}

\subsection{Implementation} \label{subsec:ispl}
A Boolean evolution system can be written as a program in the input
language of a symbolic model checker, such as NuSMV~\cite{NuSMV}. The
program is then parsed by the model checker to build a transition
system. In this section, we show how to model a Boolean evolution
system by an ISPL (Interpreted System Programming Language)
\cite{MCMAS} program, inspired by the Interpreted System semantics
\cite{fhmv}, and the corresponding transition system can be generated
by the model checker MCMAS \cite{MCMAS}. We use Example 1 to
illustrate how to construct an ISPL program from the Boolean evolution
system $BES = \langle
\mathcal{B}^{unknown}\cup\mathcal{B}^{known},\mathcal{R} \rangle$.

An ISPL program contains a set of agents, a set of atomic
propositions, an expression representing
the initial states and a set of logic formulas representing the
specification of the system. The structure of the program is as
follows :
\begin{verbatim}
Agent 1 ... end Agent
...
Agent n ... end Agent
Evaluation ... end Evaluation
InitStates ... end InitStates
Formulae ... end Formulae
\end{verbatim}
where atomic propositions are defined in the section ``Evaluation''
and the initial states defined in ``InitStates''. 
Each agent is composed of a set of
{\em program variables}, a set of {\em actions} that the agent can execute, a
{\em protocol} and an {\em evolution function}. Each agent has a set of local
states that are encoded by its program variables: each valuation of the
variables is a local state. Its protocol defines a set of enabled
actions for each local state, and its evolution function specifies the
transition relation among its local states. The structure of an
agent $M$ is below:
\begin{verbatim}
Agent M
  Vars: ... end Vars
  Actions = {...};
  Protocol: ... end Protocol
  Evolution: ... end Evolution
end Agent
\end{verbatim}
To encode a BES into an
ISPL program, we only need one agent, and this agent has only one
action, which is enabled in every local state. In the rest of this
section, we give  details of the construction of the ISPL
program. The definition of actions and protocol is omitted as
they do not affect the translation of the BES. 

\begin{enumerate}
\item
As explained before, we do not directly list all states in the state space $S$ of the
corresponding transition system. Instead, we define program variables
to match variables in $BES$. Each variable in $\mathcal{B}^{known}$ is
translated into a Boolean variable in ISPL and each variable in
$\mathcal{B}^{unknown}$ into an enumerated variable with three values
$True$, $False$ and $Unknown$. The corresponding ISPL code for Example 1
is as follows.
\begin{verbatim}
Vars:
  a: boolean;
  b: {True, False, Unknown};
  c: {True, False, Unknown};
end Vars
\end{verbatim}

\item Each evolution rule is translated into a guarded transition
  ``$c$ \verb+if+ $g$'' in ISPL, where guard $g$ is a Boolean expression
  over variables, and $c$ is a set of assignments. Indeed, the
  semantics of a guarded transition matches exactly that of an
  evolution rule. The rules in Example 1 are translated into the ISPL code below.
\begin{verbatim}
Evolution:
  b=False if a=true;    	
  c=True if a=true;
  b=False if c=False;
end Evolution
\end{verbatim}

\item
As each variable in $\mathcal{B}^{unknown}$ in $BES$ is initialized to
{\em unknown}, we need to specify this in the initial state section
$IniStates$ in an ISPL program. The following code is generated for
Example 1.
\begin{verbatim}
InitStates
  M.b=Unknown and M.c=Unknown; 
end InitStates
\end{verbatim}
Note that \verb+M+ is the name of the agent, which encapsulates the variables and transitions, and \verb+M.x+ refers to the variable
\verb+x+ in \verb+M+.
\item
An atomic proposition in ISPL is of the form ``$x$ \verb+if+ $g$'',
where $x$ is the name of the atomic proposition, and $g$ is a Boolean
expression that defines the set of states $x$ holds. That is, $x$
holds in any state whose corresponding valuation satisfies $g$. The
ISPL code for Example 1 is below.
\begin{verbatim}
Evaluation
  a_true if M.a=true;
  a_false if M.a=false;
  b_true if M.b=True;
  b_false if M.b=False;
  b_unknown if M.b=Unknown;
  c_true if M.c=True;
  c_false if M.c=False;
  c_unknown if M.c=Unknown;
end Evaluation
\end{verbatim}
\end{enumerate}

The above construction steps suggests that
a compiler can be produced without difficulties to automatically
generated ISPL code from a given Boolean evolution system. 

Although we have shown the possibility of coding a Boolean evolution
system in ISPL, we would like to emphasize that compilers for other
symbolic model checkers can also be constructed when necessary. For
example, the semantics of the input language of NuSMV is similar to
that of ISPL in this setting as we do not use the
capability of specifying actions and protocols in ISPL.



\section{Stability and inconsistency check} \label{sec:mc} 

Computation Tree Logic (CTL) \cite{ClarkeES86} and Linear time Temporal Logic (LTL) \cite{Pnueli77}  are the most popular logics
adopted in verification of transition systems to specify properties
that a system under investigation may possess. CTL is a branching time
logic, which considers all possibilities of future behaviour, while
LTL only deals with one
possible future behaviour at a time.
In this section, we use CTL
to formulate stability and inconsistency checks due to the efficient
implementation of CTL model checking. But we also discuss
the application of LTL when possible.

\subsection{CTL and LTL}
LTL can be specified by the following grammar \cite{cgp99}:
\[ \varphi ::= p \mid \neg \varphi \mid \varphi\land \varphi \mid
\bigcirc \varphi \mid \Box \varphi \mid \Diamond \varphi \mid
\varphi~\mathcal{U}~\varphi\]

CTL on the other hand is given by the extended grammar \cite{cgp99}:
\begin{equation*}
\begin{split}
 \varphi ::= & \; p \mid \neg \varphi \mid \varphi\land \varphi \mid
EX \varphi \mid EG \varphi \mid EF \varphi \mid
E(\varphi~\mathcal{U}\varphi)\mid \\
& AX \varphi \mid AG \varphi \mid AF \varphi \mid
A(\varphi~\mathcal{U}\varphi)
\end{split}
\end{equation*}

Both CTL and LTL are defined over paths in a transition system. Given
a transition system $\mathcal{M} = \langle S, S_0, T, A, H \rangle$, 
a path $\rho=s_0s_1\ldots s_k$ is a (finite or infinite) sequence of states such that
for each pair of adjacent states, there exists a transition in the
system, i.e., $s_i\in S$ for all $0\le i\le k$ and $(s_j, s_{j+1})\in
T$ for all $0\le j < k$. We denote the $i$-th state in the path
$\rho$, i.e., $s_i$, by $\rho(i)$. The satisfaction of CTL and LTL in
$\mathcal{M}$ is defined as follows.

\begin{definition}[Satisfaction of CTL] \label{CTL_sat}
Given a transition system $\M = \langle S, S_0, T, A, H \rangle$ and a
state $s\in S$, the satisfaction for a CTL formula $\varphi$ at state $s$ in
$\M$, denoted by $ s \models \varphi$, is recursively defined as follows.
\begin{itemize}
\item $ s\models p$ iff $p\in H(s)$;
\item $ s\models \neg \varphi$ iff it is not the case that $ s
  \models\varphi$;
\item $ s\models \varphi_1 \land \varphi_2$ iff $ s\models \varphi_1$ and
  $ s\models \varphi_2$;

\item $ s\models EX \varphi$ iff there exists a path $\rho$ starting at
  $s$ such that $  \rho(1) \models \varphi$.
\item $ s\models EG \varphi$ iff there exists a path $\rho$ starting at
  $s$ such that $  \rho(i) \models \varphi$ for all $i\ge 0$;
\item $ s \models EF \varphi$ iff there exists a path $\rho$
  starting at $s$ such that for some $i\ge 0$, $  \rho(i) \models
  \varphi$;
\item $ s \models E(\varphi_1 U \varphi_2)$ iff there exists a path $\rho$
  starting at $s$ such that for some $i\ge 0$, $  \rho(i) \models
  \varphi_2$ and $  \rho(j) \models \varphi_1$ for all $0\le j<i$;
\item $ s\models AX \varphi$ iff for all paths $\rho$ starting at
  $s$, we have $  \rho(1) \models \varphi$.
\item $ s\models AG \varphi$ iff for all paths $\rho$ starting at
  $s$, we have $  \rho(i) \models \varphi$ for all $i\ge 0$;
\item $ s \models AF \varphi$ iff for all paths $\rho$
  starting at $s$, there exists $i\ge 0$ such that $  \rho(i) \models
  \varphi$;
\item $ s \models A(\varphi_1 U \varphi_2)$ iff for all paths $\rho$
  starting at $s$, there exists $i\ge 0$ such that $  \rho(i) \models
  \varphi_2$ and $  \rho(j) \models \varphi_1$ for all $0\le j<i$;
\end{itemize}
\end{definition}

\begin{definition}[Satisfaction of LTL] \label{LTL_sat}
Given a transition system $\M = \langle S, S_0, T, A, H \rangle$ and a
state $s\in S$, the satisfaction for a LTL formula $\varphi$ at state $s$ in
$\M$, denoted $ s \models \varphi$, is recursively defined as follows.
\begin{itemize}
\item $ s\models p$ iff $p\in H(s)$;
\item $ s\models \neg \varphi$ iff it is not the case that $ s
  \models\varphi$;
\item $ s\models \varphi_1 \land \varphi_2$ iff $ s\models \varphi_1$ and
  $ s\models \varphi_2$;

\item $ s\models \bigcirc \varphi$ iff for all paths $\rho$ starting at
  $s$, we have $  \rho(1) \models \varphi$.
\item $ s\models \Box \varphi$ iff for all paths $\rho$ starting at
  $s$, we have $  \rho(i) \models \varphi$ for all $i\ge 0$;
\item $ s \models \Diamond \varphi$ iff for all paths $\rho$
  starting at $s$, there exists $i\ge 0$ such that $  \rho(i) \models
  \varphi$;
\item $ s \models \varphi_1 U \varphi_2$ iff for all paths $\rho$
  starting at $s$, there exists $i\ge 0$ such that $  \rho(i) \models
  \varphi_2$ and $  \rho(j) \models \varphi_1$ for all $0\le j<i$;
\end{itemize}

\end{definition}

%
When we verify whether a CTL/LTL formula $\varphi$ holds on a model, we check if
this formula is satisfied by all initial states, denoted by $\M
\models \varphi$. In
particular, when we say that an LTL $\varphi$ holds in the model,
every path from every initial state has to satisfy $\varphi$. More
details of CTL and LTL, as well as the difference between them, can be
found in~\cite{cgp99}.

\subsection{Formulation of stability and inconsistency  by logic
  formulae} \label{subsec:logic}



\begin{lemma} \label{thm:incons}
The first category of inconsistency can be checked by the following CTL formula
\begin{equation} \label{eqn:incons}
AG(\neg(EX \mathtt{B}_1 \land EX \mathtt{D}_1) \land \cdots \land \neg(EX \mathtt{B}_n \land EX \mathtt{D}_n)). 
\end{equation}
\end{lemma}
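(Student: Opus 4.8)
The plan is to reduce the global CTL statement to a purely \emph{local} characterisation of first-category inconsistency at a single valuation, and then to lift that characterisation through the $AG$ operator.

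First I would establish the local claim: for every state $s$ of the transition system $\M$ generated from $BES$ and every index $i\in\{1,\dots,n\}$,
\[
 s\models EX\,\mathtt{B}_i \land EX\,\mathtt{D}_i
 \quad\Longleftrightarrow\quad
 \text{at } s \text{ two enabled rules assign opposite values to } b_i .
\]
The direction ($\Leftarrow$) is immediate from clause~3 of the transition-system construction: if some enabled rule sets $b_i$ to $true$ and another sets $b_i$ to $false$ at $s$, then $s$ has (at least) two successors, one valuation in which $b_i$ is $true$ — hence labelled $\mathtt{B}_i$ — and one in which $b_i$ is $false$ — hence labelled $\mathtt{D}_i$ — so both $EX\,\mathtt{B}_i$ and $EX\,\mathtt{D}_i$ hold at $s$. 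For ($\Rightarrow$), suppose $s\models EX\,\mathtt{B}_i\land EX\,\mathtt{D}_i$; then $s$ has one successor with $b_i$ evaluated $true$ and a (necessarily distinct) successor with $b_i$ evaluated $false$. By the construction, every successor of $s$ agrees with the synchronous-evolution update on each variable that is updated consistently, as well as on each variable that is not updated at all (where the old value is inherited); successors may differ only on variables receiving opposite assignments from two enabled rules. Hence two successors can disagree on $b_i$ itself only if $b_i$ is exactly such a variable, i.e. only if a first-category inconsistency involving $b_i$ occurs at $s$. (The only branching the construction ever introduces for $b_i$ is between $true$ and $false$; the third value $unknown$ of an unknown variable is never produced by a rule, so its presence — witnessed by $\mathtt{K}_i$ rather than $\mathtt{B}_i$ or $\mathtt{D}_i$ — is harmless bookkeeping here.)

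Next I would assemble the conjunction and pass to $AG$. By the local claim, $s\models\bigwedge_{i=1}^{n}\neg(EX\,\mathtt{B}_i\land EX\,\mathtt{D}_i)$ iff at $s$ no variable is assigned opposite values by two enabled rules, i.e. iff no first-category inconsistency occurs at $s$. Recalling that $\M\models AG\,\psi$ means that $\psi$ holds at every state reachable from an initial state, and that $S_0$ consists precisely of the initial valuations of $BES$, we obtain that the formula~\eqref{eqn:incons} holds in $\M$ iff no reachable valuation exhibits an inconsistent update of the first kind. Since Definition~\ref{def:incon}(1) declares the system inconsistent exactly when such a valuation arises ``at some time'' during evolution, this is the claimed equivalence, proving the lemma.

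The main obstacle — indeed the only delicate point — is the ($\Rightarrow$) step: one must invoke the precise wording of clause~3 of the construction to argue that two distinct successors of $s$ can differ on $b_i$ \emph{only} because of an inconsistent update of $b_i$ itself, and never merely as a side effect of an inconsistent update of some other variable $b_j$, since a branch caused by $b_j$ leaves the assignment to $b_i$ untouched. Everything else — the determinism of $\M$ at consistent states, and the treatment of the $unknown$ value — is routine.
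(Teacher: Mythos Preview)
Your proposal is correct and follows essentially the same approach as the paper: both argue that $EX\,\mathtt{B}_i \land EX\,\mathtt{D}_i$ captures a first-category inconsistency for $b_i$ at a single state and then globalise via $AG$, with the same remark that the value $unknown$ need not be considered since no rule ever assigns it. Your treatment is in fact more thorough than the paper's, which argues only the ($\Leftarrow$) direction explicitly; your careful justification of ($\Rightarrow$) via clause~3 of the transition-system construction fills a gap the paper leaves implicit.
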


\begin{proof}
  If a system is inconsistent due to the first case, then there must exist a state that has
  two successor states such that a variable is evaluated to {\em true} in
  one successor state, and to {\em false} in the other. The CTL formula $EX
  \mathtt{B}_i \land EX \mathtt{D}_i$ captures this scenario for
  variable $b_i$. The negation $\neg(\ldots)$ excludes the occurrence
  of inconsistency caused by $b_i$. Operator $AG$ guarantees that
  inconsistency does not occur in any states. Note that it is not
  necessary to consider a case like $EX \mathtt{K}_i \land EX
  \mathtt{B}_i\land EX
  \mathtt{D}_i$ for an {\em unknown} variable because it cannot be
  assigned to $unknown$ during evolution. 
\end{proof}
If the above formulae are evaluated to true, then the Boolean
evolution system is consistent. Note that this category of
inconsistency cannot be checked by an LTL formula because LTL can only
specify linear properties. However, a small modification would make
LTL work again on checking consistency.
Lemma~\ref{thm:incons} searches for occurrences of inconsistency by
checking if two opposite values of a variable can be reached from one
state. The following theorem focuses on looking for such a state to
perform consistency checks.

\begin{theorem}\label{thm:incons1}
Checking the first category of inconsistency can be transformed into a
reachability problem as follows.
\begin{enumerate}
\item
For each pair of rules $g_1\rightarrow X_1$ and $g_2\rightarrow X_2$,
check if $X_1$ and $X_2$ assign opposite values to the same Boolean variable.
If the answer is yes and $g_1\land g_2\neq false$, then we add a new
atomic proposition $C$ that holds in states satisfying $g_1\land g_2$.
\item
Let $\mathcal{C}=\{C_1,\ldots, C_m\}$ be the set of all newly added
propositions in the previous step.
The first category of inconsistency can be identified by the 
CTL formula 
\begin{equation} \label{eqn:incons1_ctl}
\neg EF(C_1\lor \cdots \lor C_m)
\end{equation}
or the LTL formula
\begin{equation} \label{eqn:incons1_ltl}
\neg \Diamond(C_1\lor \cdots \lor C_m)
\end{equation}
\end{enumerate}
\end{theorem}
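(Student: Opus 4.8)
The plan is to prove the two claims of Theorem~\ref{thm:incons1} in sequence: first that the construction in step~1 correctly characterises, by reachability, precisely those states where a first-category inconsistency is triggered, and second that the CTL and LTL formulae in step~2 express the absence of such reachable states.

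First I would analyse step~1. Recall from Definition~\ref{def:incon} that the first category of inconsistency occurs exactly when, at some valuation $\overline{\mathcal{B}}$ reached during evolution, two enabled rules try to assign opposite values to the same Boolean variable. Fix two rules $g_1\rightarrow X_1$ and $g_2\rightarrow X_2$ whose assignments touch a common variable $b$ with opposite values (one sets $b$, the other sets $\neg b$). At a valuation $\overline{\mathcal{B}}$, both are enabled simultaneously iff $\overline{\mathcal{B}}\models g_1\land g_2$. If $g_1\land g_2$ is unsatisfiable as a Boolean formula ($g_1\land g_2\equiv false$), no valuation enables both, so this pair can never cause inconsistency and is rightly discarded; otherwise the set of valuations enabling both is exactly the set of states satisfying $g_1\land g_2$, which is what the new proposition $C$ labels. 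Ranging over all such conflicting pairs, a first-category inconsistency arises during evolution from some initial state iff some state satisfying $C_1\lor\cdots\lor C_m$ is reachable from an initial state. The one subtlety I would flag here, and treat carefully, is the word "reachable": in the transition system of Section~\ref{subsec:ts}, a state satisfying $g_1\land g_2$ may itself only be obtainable via an earlier inconsistent (random-resolution) transition, but that does not matter — we only need that \emph{if} such a state is reached, the system is inconsistent, and conversely that a first-category inconsistency by definition means some enabling state for a conflicting pair is actually visited. I would also note that every reachable valuation has a successor (the "non-empty set of enabled rules" assumption), so reachability in the TS coincides with reachability along genuine evolution sequences.

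Next I would handle step~2. Given the equivalence established above — "the BES exhibits a first-category inconsistency" iff "a state satisfying $\bigvee_i C_i$ is reachable from $S_0$" — it remains only to observe that reachability of a target set is precisely what $EF$ expresses in CTL and what $\Diamond$ expresses in LTL. Concretely, for any initial state $s_0$, $s_0\models EF(C_1\lor\cdots\lor C_m)$ iff there is a path from $s_0$ to a state where some $C_i$ holds; since $\M\models\varphi$ means all initial states satisfy $\varphi$, $\M\models EF(\bigvee_i C_i)$ iff the bad set is reachable from some initial state. Hence the first category of inconsistency is \emph{absent} iff $\M\models\neg EF(\bigvee_i C_i)$, which is \eqref{eqn:incons1_ctl}. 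For LTL, because $\bigvee_i C_i$ is a state formula and "reachable along some path" versus "holds on every path" matters, I would point out that here the correct reading is: the system is consistent (w.r.t.\ category one) iff \emph{every} path from \emph{every} initial state avoids $\bigvee_i C_i$ forever, i.e.\ $\M\models\neg\Diamond(\bigvee_i C_i)$ under the "all paths" semantics of LTL given in Definition~\ref{LTL_sat}; a path reaching a $C_i$-state is exactly a witness for $\Diamond(\bigvee_i C_i)$, and such a witness exists iff the bad set is reachable. This gives \eqref{eqn:incons1_ltl}.

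The main obstacle I expect is not any individual step but pinning down the precise correspondence between "inconsistency occurring during evolution" (a statement about the dynamics of the BES) and "reachability of a labelled state" (a statement about the TS), especially reconciling the nondeterministic branching introduced for inconsistent updates with the deterministic flavour of synchronous evolution. I would address this by arguing that the $C_i$ propositions detect the \emph{trigger condition} (two conflicting rules simultaneously enabled) rather than the act of the inconsistent update itself, so the branching in the TS is irrelevant to the detection: a conflicting pair is enabled at $\overline{\mathcal{B}}$ purely as a function of whether $\overline{\mathcal{B}}\models g_1\land g_2$, independently of how any successor is chosen. A secondary point worth a sentence is completeness of the pair enumeration — since every assignment in every rule targets a single variable (Definition of BES, after ungrouping grouped rules), every first-category conflict is a conflict between \emph{some} pair of (possibly ungrouped) rules, so the finite set $\mathcal{C}$ captures all of them.
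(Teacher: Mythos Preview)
Your proposal is correct and follows essentially the same approach as the paper: identify states where two conflicting guards are simultaneously satisfied via the propositions $C_i$, then reduce first-category inconsistency to reachability of $\bigvee_i C_i$, which is exactly what $EF$ and $\Diamond$ express. Your treatment is in fact considerably more careful than the paper's brief proof---you explicitly address the determinism-before-first-conflict issue, the irrelevance of the random-resolution branching to the trigger condition, and the completeness of the pair enumeration, none of which the paper spells out.
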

The system is consistent if Formula~\ref{eqn:incons1_ctl} or
\ref{eqn:incons1_ltl} is true.

\begin{proof}
If a state satisfies a proposition $C_i$, which is constructed as
$g_{j_1}\land g_{j_2}$, then both guards $g_{j_1}$ and $g_{j_2}$ are
satisfied in the state. Thus, the corresponding rules
$g_{j_1}\rightarrow X_{j_1}$ and $g_{j_2}\rightarrow X_{j_2}$ are
enabled in the state. As these two rules set opposite values to a
variable, inconsistency occurs in this state if it is reachable from
an initial state. $\mathcal{C}$ captures
all states where inconsistency could happen, and $EF$ and $\Diamond$
examine if any of these states can be reached from an initial
state. Note that $\neg EF(C_1\lor \cdots \lor C_m)\equiv\neg(EF\;C_1\lor \cdots \lor
EF\;C_m)$ and $\neg \Diamond(C_1\lor \cdots \lor C_m)\equiv\neg
(\Diamond C_1\lor \cdots \lor\Diamond C_m)$. 
\end{proof}

Although the second and the third cases are not needed in the relaxed
inconsistency conditions, we still present the temporal logic
properties for checking them.
\begin{lemma}
The second category of inconsistency can be checked by the following CTL formula
\begin{equation} \label{eqn:incons2_ctl}
\begin{split}
AG(&\neg(\mathtt{B}_{n_1+1} \land EX \mathtt{D}_{n_1+1}) \land
\neg(\mathtt{D}_{n_1+1} \land EX \mathtt{B}_{n_1+1}) \land \\
&\cdots \land
\neg(\mathtt{B}_n \land EX \mathtt{D}_n) \land 
\neg(\mathtt{D}_n \land EX \mathtt{B}_n)). 
\end{split}
\end{equation}
 or LTL
formula
\begin{equation} \label{eqn:incons2_ltl}
\begin{split}
\Box(&\neg(\mathtt{B}_{n_1+1} \land \bigcirc \mathtt{D}_{n_1+1}) \land
\neg(\mathtt{D}_{n_1+1} \land \bigcirc \mathtt{B}_{n_1+1}) \land \\
&\cdots \land
\neg(\mathtt{B}_n \land \bigcirc \mathtt{D}_n)  \land 
\neg(\mathtt{D}_n \land \bigcirc \mathtt{B}_n)). 
\end{split}
\end{equation}
\end{lemma}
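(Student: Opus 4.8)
The plan is to follow the same pattern as the proof of Lemma~\ref{thm:incons}, reducing the temporal-logic claim to a simple combinatorial fact about how a \emph{known} variable can drift away from its initial value along a reachable path. I identify each state of $\mathcal{M}$ with its valuation and recall that evolution rules only ever assign $true$ or $false$, so a variable $b_i\in\mathcal{B}^{known}$ is never $unknown$; hence in every state exactly one of $\mathtt{B}_i,\mathtt{D}_i$ holds, and ``$b_i$ changes value on the transition $s\to s'$'' means precisely that one of $\mathtt{B}_i,\mathtt{D}_i$ holds at $s$ and the other at $s'$. The first step is to put~(\ref{eqn:incons2_ctl}) in contrapositive form: writing it as $AG\big(\bigwedge_{i=n_1+1}^{n}\psi_i\big)$ with $\psi_i=\neg(\mathtt{B}_i\land EX\mathtt{D}_i)\land\neg(\mathtt{D}_i\land EX\mathtt{B}_i)$, its negation is $EF\bigvee_{i}\big((\mathtt{B}_i\land EX\mathtt{D}_i)\lor(\mathtt{D}_i\land EX\mathtt{B}_i)\big)$, i.e.\ the formula is violated exactly when some reachable state has an outgoing transition along which some known $b_i$ changes value. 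So it suffices to prove: the second category of inconsistency of Definition~\ref{def:incon} occurs iff some known $b_i$ changes value on some reachable transition.

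For the forward direction, suppose some $b_i\in\mathcal{B}^{known}$ takes, at a reachable state $u$, the value opposite to its initial value. Fix a path $s_0\longrightarrow\cdots\longrightarrow s_k=u$ with $s_0\in S_0$ and let $j$ be least with $b_i$'s value at $s_{j+1}$ different from its value at $s_0$; such $j$ exists since $b_i$ agrees with its initial value at $s_0$ and differs from it at $s_k$. Then $s_j$ is reachable, $b_i$ still carries its initial value at $s_j$, and at the successor $s_{j+1}$ it carries the opposite value, so $s_j\models\mathtt{B}_i\land EX\mathtt{D}_i$ if the initial value is $true$ and $s_j\models\mathtt{D}_i\land EX\mathtt{B}_i$ if it is $false$; either way $\psi_i$ fails at a reachable state and~(\ref{eqn:incons2_ctl}) is false. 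Conversely, if~(\ref{eqn:incons2_ctl}) is false, some reachable $s$ satisfies, say, $\mathtt{B}_i\land EX\mathtt{D}_i$ for known $b_i$, so $b_i$ is $true$ at $s$ and $false$ at some successor $s'$; if $b_i$'s initial value is $false$ then $s$ is a reachable state on which $b_i$ differs from its initial value, and if it is $true$ then $s'$ is such a state. The conjunct $\mathtt{D}_i\land EX\mathtt{B}_i$ is symmetric. This proves the equivalence, hence the CTL part.

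For the LTL formula~(\ref{eqn:incons2_ltl}) I would rerun this argument along paths: $\Box(\cdots)$ holds on every path from every initial state iff no path ever takes a transition flipping a known variable, and since the transition relation is total (every valuation has a successor, possibly itself), such a path exists iff a reachable state has such an outgoing transition, which by the CTL part is exactly the second category of inconsistency. The only delicate point, and the one I expect to be the main obstacle, is the next-step operator: along one path $\bigcirc$ refers to \emph{the} successor on that path whereas the CTL $EX$ refers to \emph{some} successor, and these agree only when $\mathcal{M}$ is deterministic. This is not actually a restriction here, because by the construction of $\mathcal{M}$ branching arises solely from first-category conflicts, and the second-category check is meant to be applied once the first category has been excluded; I would state this hypothesis explicitly (as the surrounding text already does, treating these as auxiliary checks) so that $EX$, $AX$ and $\bigcirc$ coincide and the two formulae express the same property.
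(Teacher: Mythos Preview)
Your argument is correct and follows essentially the same line as the paper's proof: the formula fails exactly when some reachable state has a successor on which a known variable flips, and $AG$/$\Box$ quantify this over all reachable states. Your version is in fact more careful than the paper's on two points the paper glosses over---you explicitly bridge ``differs from the initial value at some time'' with ``flips on some transition'' via a first-flip argument, and you correctly isolate the determinism hypothesis needed to make the $EX$ and $\bigcirc$ clauses coincide---but the underlying approach is the same.
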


\begin{proof}
  If this case occurs, then there must exist a state $s$ that has
  a successor state $s'$ such that a variable is evaluated to $true$
  in $s$ and $false$ in $s'$, or $false$ in $s$ and $true$ in $s'$.
  The CTL formulas $\mathtt{B}_i \land EX \mathtt{D}_i$ and
  $\mathtt{D}_i \land EX \mathtt{B}_i$ ($n_1+1 \le i \le n$) capture this scenario for
  variable $b_i$. The LTL formulas $\mathtt{B}_i \land \bigcirc 
  \mathtt{D}_i$ and $\mathtt{D}_i \land \bigcirc \mathtt{B}_i$
  have the same effect. The negation $\neg(\ldots)$ excludes the occurrence
  of inconsistency caused by $b_i$. Operator $AG$ (or $\Box$) guarantees that
  inconsistency does not occur in any states. 
\end{proof}

The third category of inconsistency can be checked in the same way
over the {\em unknown} variables. 

\begin{lemma} \label{thm:stability}
The stability problem can be checked by the following CTL formula
\begin{equation} \label{eqn:ctl}
\begin{split}
AF (&(AG\;\mathtt{B}_1 \lor AG \;\mathtt{D}_1 \lor AG\;\mathtt{K}_1) \land \cdots \land \\
& (AG\; \mathtt{B}_{n_1} \lor AG \;\mathtt{D}_{n_1} \lor AG\;\mathtt{K}_{n_1})\land\\
& (AG\;\mathtt{B}_{n_1+1} \lor AG \;\mathtt{D}_{n_1+1}) \land \cdots \land \\
& (AG\; \mathtt{B}_{n} \lor AG \;\mathtt{D}_{n})
). 
\end{split}
\end{equation}

or LTL formula 
\begin{equation} \label{eqn:ltl}
\begin{split}
\Diamond (&(\Box \mathtt{B}_1 \lor \Box \mathtt{D}_1 \lor \Box \mathtt{K}_1) \land \cdots
\land \\
& (\Box \mathtt{B}_{n_1} \lor \Box \mathtt{D}_{n_1} \lor \Box \mathtt{K}_{n_1})\land  \\
& (\Box \mathtt{B}_{n_1+1} \lor \Box \mathtt{D}_{n_1+1}) \land \cdots
\land \\
& (\Box \mathtt{B}_{n} \lor \Box \mathtt{D}_{n}) )
\end{split}
\end{equation}
\end{lemma}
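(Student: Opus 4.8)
The plan is to reduce the lemma to a state-level fact and then peel off the outermost temporal operator. Write $\Phi$ for the inner conjunction that appears under $AF(\cdot)$ in (\ref{eqn:ctl}), and $\Phi'$ for its LTL analogue under $\Diamond(\cdot)$ in (\ref{eqn:ltl}). The core claim I would establish first is: $\Phi$ holds at a state $\overline{\mathcal{B}}$ of the transition system $\mathcal{M}$ if and only if $\overline{\mathcal{B}}$ is a stable valuation, i.e.\ $\overline{\mathcal{B}}' = \overline{\mathcal{B}}$.

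For the ``if'' direction, suppose $\overline{\mathcal{B}}$ is stable. By the construction of the transition relation $T$ this forces $\overline{\mathcal{B}}$ to be a sink whose only outgoing edge is the self-loop: a conflicting update would, by Definition~\ref{def:incon} and the construction of $T$, produce a successor disagreeing with $\overline{\mathcal{B}}$ on the conflicted variable, contradicting $\overline{\mathcal{B}}' = \overline{\mathcal{B}}$. Hence the only path from $\overline{\mathcal{B}}$ is the constant one, every atomic proposition keeps its truth value along it, and for each variable $b_i$ exactly one of $AG\,\mathtt{B}_i$, $AG\,\mathtt{D}_i$, $AG\,\mathtt{K}_i$ holds at $\overline{\mathcal{B}}$ (the $\mathtt{K}_i$ disjunct only when $i \le n_1$), so $\Phi$ holds. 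For the ``only if'' direction, assume $\Phi$ holds at $\overline{\mathcal{B}}$; then for every variable $b_i$ one of its three $AG$-disjuncts holds, which pins the value of $b_i$ ($true$, $false$, or $unknown$) not only at $\overline{\mathcal{B}}$ but at every state reachable from $\overline{\mathcal{B}}$. The synchronous semantics always defines at least one successor $\overline{\mathcal{C}}$ of $\overline{\mathcal{B}}$, and $\overline{\mathcal{C}}$ agrees with $\overline{\mathcal{B}}$ on every variable, so $\overline{\mathcal{C}} = \overline{\mathcal{B}}$; since this holds for every successor, there is no conflicting update and $\overline{\mathcal{B}}' = \overline{\mathcal{B}}$, i.e.\ $\overline{\mathcal{B}}$ is stable.

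Given this state-level equivalence, $\overline{\mathcal{B}} \models AF\,\Phi$ says exactly that every path of $\mathcal{M}$ from $\overline{\mathcal{B}}$ eventually reaches a stable valuation, and $\mathcal{M} \models$ (\ref{eqn:ctl}) says this from every initial state. I would then close the gap with the definition of stability (which speaks of ``any valuation'') by a path-splicing argument: if some reachable $\overline{\mathcal{B}}$ admitted a path that avoids stable valuations forever, one could concatenate it with a finite witness of reachability from an initial state; that witness cannot itself pass through a stable valuation, since such a valuation loops only on itself and could not then lead on to $\overline{\mathcal{B}}$, so the concatenation is a stable-avoiding path from an initial state, contradicting (\ref{eqn:ctl}). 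Together with the observation that the relevant valuations of the robot are precisely those reachable from $S_0$, this yields that (\ref{eqn:ctl}) holds in $\mathcal{M}$ iff the Boolean evolution system is stable. For the LTL formula, I would note that along a fixed infinite path $\rho$ the conjunct $\Box\,\mathtt{B}_i$ (resp.\ $\Box\,\mathtt{D}_i$, $\Box\,\mathtt{K}_i$) read at a suffix of $\rho$ means $b_i$ retains that value from then on, so $\Phi'$ read at a suffix means the whole valuation is constant from then on, i.e.\ $\rho$ has reached a stable valuation; hence $\Diamond\,\Phi'$ on $\rho$ expresses that $\rho$ reaches a stable valuation, and $\mathcal{M} \models$ (\ref{eqn:ltl}) expresses the same ``every path from every initial state reaches a stable valuation'' as the CTL formula, so the two are equivalent.

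I expect the main obstacle to be the careful handling of two points that are easy to gloss over. The first is the equivalence, in the state-level claim, between ``each variable is individually frozen from $\overline{\mathcal{B}}$ on'' and ``$\overline{\mathcal{B}}$ is a genuine fixpoint of the synchronous semantics'': one must explicitly rule out a state carrying a conflicting update whose two branches happen to leave every variable unchanged, which is impossible precisely because the branches differ on the conflicted variable, but this must be tied back to Definition~\ref{def:incon} and the construction of $T$ rather than simply assumed. The second is the quantifier mismatch between ``$\mathcal{M} \models \varphi$'' (all initial states) and the phrasing of stability (``any valuation''); the path-splicing argument closes it, but it leans on stable valuations being sinks, so that step deserves to be spelled out. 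Everything else is a routine unwinding of the CTL and LTL satisfaction clauses in Definitions~\ref{CTL_sat} and~\ref{LTL_sat}.
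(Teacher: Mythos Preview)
Your proposal is correct and follows essentially the same line as the paper's own proof: identify the inner conjunction with the property ``this state is a stable valuation'' and then read the outer $AF/\Diamond$ as ``every path eventually reaches such a state.'' In fact you are considerably more careful than the paper, which gives only a one-direction sketch; your explicit biconditional at the state level, the sink argument for stable valuations, and the path-splicing step closing the gap between ``all initial states'' and the definition's ``any valuation'' all fill in details the paper leaves implicit.
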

If the above LTL or CTL formula is evaluated to true, then the Boolean evolution system is stable.

\begin{proof}
  In a stable system, every path leads to a stable state, where no
  {\em unknown} variable will change its value any more. Therefore,
  one of three cases $\Box \mathtt{B}_i$, $\Box \mathtt{D}_i$ or
  $\Box \mathtt{K}_i$ for {\em unknown} variable $b_i$ holds in the
  stable state. The last case means that the {\em unknown} variable
  remains $unknown$ during the evolution. The {\em known} variables
  cannot take value {\em unknown}. Thus, we do not need to consider
  them being $unknown$ in the LTL formula. The operator $\Diamond$ specifies that this
  stable state will be reached eventually.  The CTL formula can be
  reasoned in a similar way. 
\end{proof}

\subsection{Efficient algorithms for stability and inconsistency check}

Although Theorem~\ref{thm:incons1} provides a simpler CTL/LTL formula
than Lemma~\ref{thm:incons}, in practice, it can be improved
further. In order to check if the formula $EF \varphi$ is satisfied in
a transition system $\mathcal{M} = \langle S, S_0, T, A, H \rangle$ by
symbolic model checking, we need to compute the set of states
satisfying the formula $EF \varphi$ using a fixed-point
computation. Let $SAT(\varphi)$ represent the set of states satisfying
$\varphi$. The fixed-point computation begins with a set
$X_0=SAT(\varphi)$ and computes a sequence of sets such that
$X_0\subseteq X_1\subseteq \cdots X_{n} \subseteq X_{n+1}$ until
$X_n=X_{n+1}$. The detailed algorithm is presented in
Algorithm~\ref{algo:ef}.
  
\begin{algorithm} \caption{Compute $SAT(EF \varphi)$} \label{algo:ef}
\begin{algorithmic}[1]
\STATE{$X \coloneqq SAT(\varphi)$; $Y \coloneqq \emptyset$}

\WHILE {$Y\not= X$}   
  \STATE{$Y \coloneqq X$; $X \coloneqq X\cup \{s\in S\mid \exists s'\in X \mbox{ such that }
    (s, s')\in T\}$;}
\ENDWHILE
\STATE{\bf{return} $X$}
\end{algorithmic}
\end{algorithm}

Algorithm~\ref{algo:ef} could be time-consuming for a large
system. Fortunately, we can utilise a
characteristic of model checking to avoid the problem of checking $EF$.
A model checker generates only reachable states, which can be
reached from initial states, and perform model checking algorithms on
the reachable states. To identify inconsistency, showing the existence
of a reachable state with two conflict successor states is
sufficient. As model checkers only work on reachable states, the
existence of a bad state can be converted into non-emptiness of the
set of states satisfying $\mathcal{C}=\{C_1,\ldots, C_m\}$ defined in
Theorem~\ref{thm:incons1}, returned by a model checker. Therefore,
the fixed-point computation for $EF$ can be
avoided. Indeed, checking existence of bad states can be integrated
into the process of generation of reachable state space. Once a bad
state is found, the process can be aborted to give fast feedback to
the programmer.

For a large system, the CTL formula specified in
Lemma~\ref{thm:stability} involves a large number of conjunction
clauses $AG\;\mathtt{B}_i \lor AG \;\mathtt{D}_i \lor
AG\;\mathtt{K}_i$ or $AG\;\mathtt{B}_j \lor AG \;\mathtt{D}_j,$ and
each $AG$ requires a computational
expensive fixed-point computation, as $AG \varphi = \neg EF (\neg \varphi)$. Therefore, model checking this formula could be time
consuming. The following theorem tells us that stability checking can
be reduced to a reachability problem, which only asks for one fixed-point
computation.

\begin{theorem}\label{thm:reachability}
Stability in a consistent system $\M=\langle S, S_0, T, A, H \rangle$
can be checked in the following three steps.
\begin{enumerate}
\item
Find the set $X$ of states that only have self-loop transitions, i.e.,
$X=\{s\in S\mid \forall s' \mbox{ such that } (s,s')\in T \mbox{
  implies } s'=s\}$;
\item
Find the set $Y$ of states that can reach states in $X$;
\item
Check if $S_0\subseteq Y$. If the answer is yes, then the system is stable if it
is consistent.
\end{enumerate}
\end{theorem}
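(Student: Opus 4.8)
The plan is to reduce stability to a single reachability (fixpoint) computation by observing that consistency makes the transition relation deterministic on the part of the state space that matters. First I would note that, by the transition‑system construction, the \emph{only} way a state acquires more than one successor is a pair of enabled rules assigning opposite values to the same Boolean variable, which is exactly the first category of inconsistency in Definition~\ref{def:incon}. Since the system is consistent this never happens at a reachable state, and since the synchronous semantics assigns every valuation a successor (when no rule is enabled, or the enabled rules change nothing, then $\overline{\mathcal{B}}'=\overline{\mathcal{B}}$), the relation $T$ restricted to the reachable states — which is the subsystem a symbolic model checker actually constructs, and the one relevant to the test $S_0\subseteq Y$ — is the graph of a total function $f$. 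Consequently each reachable state $s$ has a unique forward orbit $s,f(s),f^2(s),\ldots$, and for such an $s$ one has $s\in X$ iff $f(s)=s$, i.e.\ iff $\overline{\mathcal{B}}'=\overline{\mathcal{B}}$; so $X$ is precisely the set of reachable stable valuations, and step~1 of the theorem computes exactly this set.

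Next I would use finiteness of $S$: the orbit of any reachable $s$ is eventually periodic, and it is eventually constant — equivalently, it meets $X$ — if and only if it does not get trapped in a cycle of length at least two, and these two alternatives are mutually exclusive, since a non‑trivial cycle contains no fixed point of $f$. Hence the orbit of $s$ reaches a stable valuation iff $s$ can reach $X$, i.e.\ iff $s\in Y$; this is what step~2 computes, by one backward‑reachability fixpoint from $X$. Now, because paths out of initial states are unique, ``stable'' in the sense of the stability definition (equivalently $\mathcal{M}\models AF(\ldots)$ of Lemma~\ref{thm:stability}) means that the orbit of every initial state reaches a stable valuation, which by the previous sentence is $S_0\subseteq Y$. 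For the implication actually asserted in step~3, assuming $S_0\subseteq Y$ I would show that in fact every reachable state lies in $Y$: such a state is $f^k(s_0)$ for some initial $s_0$, and since $s_0$'s orbit meets $X$ at some step $m$, either $m\ge k$ and the tail starting at $f^k(s_0)$ still meets $X$, or $m<k$ and $f$ already fixes $f^m(s_0)$ so $f^k(s_0)=f^m(s_0)\in X$; either way the state reaches $X$, so the whole reachable subsystem reaches stable valuations and the system is stable. The converse (stable $\Rightarrow S_0\subseteq Y$) is immediate.

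The substantive point here — more a matter of stating things precisely than a deep obstacle — is the first step: justifying that consistency forces $T$ to be deterministic on the reachable states, and keeping $X$ and $Y$ confined to those states so that $S_0\subseteq Y$ genuinely propagates forward to ``every reachable state reaches $X$''. Once determinism is in hand, stability collapses to the elementary fact that the sole obstruction to reaching a fixed point of $f$ is being caught in a cycle of length $\ge 2$, and those states are precisely the reachable states outside $Y$; this is what yields the promised saving over the nested $AG$/$EF$ fixpoints implicit in Lemma~\ref{thm:stability}. The same argument applies verbatim under the relaxed inconsistency conditions (only the first condition retained), which is the setting in which the characterization is not already a consequence of Theorem~\ref{theorem:simple}.
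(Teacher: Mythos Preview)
Your proposal is correct and follows essentially the same approach as the paper: both arguments hinge on the observation that consistency forces the transition relation to be deterministic on reachable states (the paper phrases this as ``a state cannot enter a non-stable loop if it can reach a stable state, otherwise there exists a state that has two successor states''), after which reachability of $X$ from $S_0$ is equivalent to stability. Your version is considerably more detailed---making the total-function structure, the eventually-periodic orbit dichotomy, and the forward propagation to all reachable states explicit---whereas the paper compresses all of this into two sentences; but the underlying idea is identical.
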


\begin{proof}
  From the definition of stability, we know that a stable valuation
  corresponds to a state that only has self-loops, and vice versa. In
  a consistent system, a state cannot enter a non-stable loop if it
  can reach a stable state. Otherwise, there exists a state that has
  two successor states, which contradicts the assumption that the
  system is consistent. Step 3 checks if there exists an initial state
  that cannot reach a stable state. The existence of such a state
  means that the system contains a non-stable loop. 
\end{proof}

\subsection{Implementation}
For instance the CUDD library~\cite{Bryant-bdd} can be used to manipulate BDDs in
MCMAS. The first step can be implemented using the function
``Cudd\_Xeqy'' in CUDD, which constructs a BDD for the function
$x=y$ for two sets of BDD variables $x$ and $y$. When applied to the
transition relation in the system, this function simply enforces that
the successor state of a state $s$ is $s$ itself, i.e., a self-loop.
The second step can be achieved by the classic
model checking algorithm for $EF$. The third step is done by checking
if $S_0-Y$ is a zero BDD, which means that the result from the set
subtraction $S_0-Y$ is empty. Therefore, this algorithm runs more
efficiently than model checking the long formula in
Lemma~\ref{thm:stability}. In practice, this stability check can be
combined with consistency checks. During the generation of the
reachable state space, we check if the system is consistent using
Theorem~\ref{thm:incons1}. If the generation is not aborted due to the
occurrence of inconsistent states, then a stability check is  
executed. 





\subsection{Counterexample generation}
A common question asked after a formula is model checked is
whether a witness execution or counterexample can be generated to
facilitate deep understanding of why the formula holds or does not hold
in the system. In our situation, it is natural to ask the model checker
to return all evolution traces that lead to inconsistency or
instability. We will show how to compute traces in MCMAS for inconsistency
first and for instability afterwards.

It is usually good in practice to generate the shortest traces for
counterexamples/witness executions in order to decrease the difficulty
of understanding them. To achieve this for our setting, we utilize the
approach of construction of state space in MCMAS. Starting from the
set of initial states $S_0$, MCMAS can compute the state space in the
following manner~\cite{MCMAS}. 

\begin{algorithm} \caption{Compute reachable states} 
\begin{algorithmic}[1]
\STATE{$S \coloneqq \emptyset$; $next\coloneqq S_0$; $q\coloneqq S_0$}

\WHILE {$S\not= q$} 

  \STATE{$S \coloneqq q$; $next\coloneqq Image(next, T)$; $n\coloneqq
    next\setminus S$; $q\coloneqq S\cup next$;}
\ENDWHILE
\STATE{\bf{return} $S$}
\end{algorithmic}
\end{algorithm}
In this algorithm, $S$ is the set of reachable states and $next$,
initialised as $S_0$, is the set of states that their successor states
need to be computed, which is done by the function $Image(next,
T)$. In each iteration, we compute the successors $next'$ of $next$,
and remove from $next'$ the states that have been processed before by
$next'-S$. This iteration continues until no new states can be added
to $S$, i.e., $next'-S=\emptyset$.
We modify the state space generation algorithm to store every intermediate $next$:
in each iteration $i$, we change $next$ to $next_i$. The modified algorithm
is shown in Algorithm~\ref{algo:modified}.
\begin{algorithm} \label{algo:modified}
\caption{Modified state space generation} 
\begin{algorithmic}[1]
\STATE{$S \coloneqq \emptyset$; $next_0\coloneqq S_0$; $q\coloneqq S_0$;
$i\coloneqq 0$}

\WHILE{$S\not= q$} 
  \STATE{$i\coloneqq i+1$}

  \STATE{$S \coloneqq q$; $next_i\coloneqq Image(next_{i-1}, T) \setminus S$;
    $q\coloneqq S\cup next_i$;}
\ENDWHILE
\STATE{\bf{return} $S$, $next_0, \ldots, next_i$}
\end{algorithmic}
\end{algorithm}

\begin{theorem}\label{thm:cex}
A shortest trace leading to an inconsistent state by enabling the
rules $g_1\rightarrow a$ and $g_2\rightarrow \neg a$, can be achieved in the
following steps.
\begin{enumerate}
\item
Starting from $i=0$, we test each $next_i$ to search for the smallest index $k$ such that $next_k \cap g_1 \cap g_2\not=\emptyset$.
\item
We pick up an arbitrary state $s_k$ from $next_k \cap g_1 \cap g_2$ and
compute its predecessor $s_{k-1}$ in $next_{k-1}$ by using the reversed
transition relation $T'$ such that $s_{k-1}\coloneqq s_k\times T'$. If
$s_k$ has multiple predecessors, then we pick up an arbitrary one to
be $s_{k-1}$. In the same way, we compute a predecessor of
$s_{k-1}$ in $next_{k-2}$. This process continues until we find a state $s_0$ in
$next_0$, which is $S_0$. 
\end{enumerate}
\end{theorem}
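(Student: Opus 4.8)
The plan is to show that Algorithm~\ref{algo:modified} arranges the reachable states of $\mathcal{M}$ into breadth-first layers, and then to read off both the validity and the optimality of the trace directly from this layering. Write $d(s)$ for the length of a shortest path in $\mathcal{M}$ from some state of $S_0$ to $s$ (and $d(s)=\infty$ if $s$ is unreachable). The key lemma to prove, by induction on $i$, is that $next_i=\{s\in S\mid d(s)=i\}$. For $i=0$ this holds since $next_0=S_0$. For the inductive step, note that at the start of iteration $i$ the program variable $S$ contains $\bigcup_{j<i}next_j=\{s\mid d(s)<i\}$ by the hypothesis; then $Image(next_{i-1},T)$ is exactly the set of states having a predecessor at distance $i-1$, and removing $S$ leaves precisely the states whose shortest distance is $i$ — any such state has a predecessor at distance $i-1$ and is not already in $S$, and conversely every state of $next_i$ has a predecessor in $next_{i-1}$ and lies in no earlier layer, so its distance is exactly $i$. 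In particular the sets $next_0,next_1,\ldots$ partition the reachable part of $S$.

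Granting the lemma, Step~1 returns the least $k$ with $next_k\cap g_1\cap g_2\neq\emptyset$ (using, as in Theorem~\ref{thm:incons1}, a guard $g_i$ to also denote the set of states on which it evaluates to $true$); by the lemma this $k$ equals $\min\{d(s)\mid s\models g_1\land g_2\}$, i.e.\ the shortest distance from an initial state to a state in which both conflicting rules $g_1\rightarrow a$ and $g_2\rightarrow\neg a$ are enabled. If no such layer exists, then no inconsistency of this form is reachable and there is nothing to construct; the theorem presupposes the inconsistent case. Any chosen $s_k\in next_k\cap g_1\cap g_2$ is an inconsistent state in the sense of Definition~\ref{def:incon}(1), since both rules fire there and disagree on $a$.

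Next I would verify that the backward pass of Step~2 yields a genuine path. Because $s_k\in next_k=Image(next_{k-1},T)\setminus S$, it has at least one predecessor that lies in $next_{k-1}$; applying the reversed transition relation $T'$ (so that $(s,s')\in T'$ iff $(s',s)\in T$) to $s_k$ produces the set of all predecessors of $s_k$, and intersecting with $next_{k-1}$ and picking any element gives $s_{k-1}$ with $(s_{k-1},s_k)\in T$. The same argument applies at each step, since every $s_j\in next_j$ with $j\ge 1$ has a predecessor in $next_{j-1}$, so the descent is well defined and terminates with some $s_0\in next_0=S_0$. By construction $(s_j,s_{j+1})\in T$ for all $0\le j<k$, hence $s_0 s_1\cdots s_k$ is a path of $\mathcal{M}$ starting in $S_0$ and ending in the inconsistent state $s_k$. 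It is a shortest such trace: if some path $s_0' s_1'\cdots s_\ell'$ with $s_0'\in S_0$ and $\ell<k$ ended in a state with $s_\ell'\models g_1\land g_2$, then $d(s_\ell')\le\ell<k$, contradicting $d(s_\ell')\ge k$ from the minimality chosen in Step~1.

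The main obstacle is the bookkeeping in the layering induction — keeping precise track of the contents of the program variable $S$ before and after each iteration, and confirming that the ``$\setminus S$'' operation is exactly what turns $Image$ into a ``new states only'' step, so that the $next_i$ partition the reachable set; once this is in place, path validity, the inconsistency of a $g_1\land g_2$ state, and minimality all follow in a few lines. One point worth stating explicitly is that, although a state with a conflicting update has several successors in $\mathcal{M}$, the backward construction only ever needs the existence of one predecessor in the preceding layer, so the arbitrary choices made along the descent do not affect correctness.
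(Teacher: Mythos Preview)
The paper states Theorem~\ref{thm:cex} without proof, so there is nothing to compare against directly. Your argument is correct: the induction establishing that $next_i$ is exactly the set of reachable states at breadth-first distance $i$ from $S_0$ is the right invariant for Algorithm~\ref{algo:modified}, and once that layering is in place, the minimality of $k$ in Step~1 and the well-definedness of the backward pass in Step~2 follow exactly as you describe. Your treatment is in fact more careful than the paper's, which leaves the correctness of the procedure entirely implicit; the observation that every $s_j\in next_j$ with $j\ge 1$ necessarily has a predecessor in $next_{j-1}$ (because $next_j\subseteq Image(next_{j-1},T)$) is precisely what guarantees the descent never gets stuck, and this is worth making explicit as you do.
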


To find the shortest counterexamples for unstable loops, we need to
identify all such loops first, and for each loop, we test each $next_i$
from $i=0$ if it contains a state in the loop, i.e., if $n_i\cap
S_{loop}\not=\emptyset$, where $S_{loop}$ is the set of states in the
loop. Next we apply the second step in Theorem~\ref{thm:cex} to
generate the shortest trace. Now we focus on how to find all unstable
loops efficiently.

\begin{lemma} \label{lem:scc}
Given a consistent system, none of the unstable loops interfere with each other.
\end{lemma}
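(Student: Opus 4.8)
The plan is to argue by contradiction, leveraging Theorem~\ref{thm:reachability} (or really the underlying observation in its proof) that in a consistent system every state has a unique successor, so the transition relation restricted to reachable states is a function. Under this observation the "unstable loops" are exactly the nontrivial cycles of a functional graph, and the whole point is that in a functional graph the cyclic structure is extremely rigid: each weakly connected component contains at most one cycle, and distinct cycles live in distinct components and hence share no states. I would first make the notion of "interfere" precise — two loops interfere if they share a state, or more generally if a single state can reach both — and then show neither can happen.

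First I would fix the consistency hypothesis and recall from the proof of Theorem~\ref{thm:reachability} that consistency forbids any state from having two distinct successors; if some reachable state $s$ had successors $s'\neq s''$ then, by the construction of $\mathcal M$ in Section~\ref{sec:modelling}, two enabled rules assign opposite values to some variable at $s$, contradicting Definition~\ref{def:incon}(1). So on the reachable fragment $T$ is a total function $f\colon S\to S$. Next I would take two distinct unstable loops $L_1,L_2$ (each a cycle $s_0\to s_1\to\cdots\to s_{k-1}\to s_0$ with $k\ge 2$, i.e.\ not a mere self-loop, since a self-loop is a stable valuation) and suppose for contradiction they "interfere," i.e.\ some state $t$ reaches a state of $L_1$ and also a state of $L_2$. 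Following the unique forward orbit $t, f(t), f^2(t),\dots$, this orbit is eventually periodic, and its eventual period is a single cycle; since it meets $L_1$ it must coincide with $L_1$, and since it meets $L_2$ it must coincide with $L_2$, forcing $L_1=L_2$, a contradiction. The special case where $L_1$ and $L_2$ simply share a state is subsumed: a shared state reaches (indeed lies on) both.

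The key steps, in order: (i) record that consistency $\Rightarrow$ the reachable transition relation is functional; (ii) note a genuine unstable loop has length $\ge 2$ and every state on it has its unique successor also on it, so the loop is a full cycle of $f$; (iii) observe that the forward orbit of any state is eventually periodic with a single limiting cycle, because $f$ is a function; (iv) conclude that any state reaching two loops forces those loops to be the common limiting cycle of that state's orbit, hence equal. I expect the main obstacle to be purely one of formulation rather than mathematics: pinning down exactly what "none of the unstable loops interfere with each other" is meant to assert — presumably that the sets of states of distinct unstable loops are pairwise disjoint, and moreover that no state feeds into two of them — and making sure the later counterexample-enumeration argument only needs this disjointness. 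Once "interfere" is read as "a common state lies on or reaches both," the functional-graph structure makes the rest routine; I would keep the write-up short and point back to the uniqueness-of-successor fact established for Theorem~\ref{thm:reachability}.
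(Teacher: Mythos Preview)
Your proposal is correct and rests on exactly the same observation the paper uses: in a consistent system each reachable state has a unique successor, so a state lying on two distinct loops would need two outgoing transitions, contradicting consistency. The paper's proof is a two-line version of your step (i)+(ii), reading ``interfere'' simply as ``share a state''; your additional treatment of the case where a state merely reaches both loops, via eventual periodicity of orbits in a functional graph, is sound but goes beyond what the paper proves or needs for the subsequent SCC argument.
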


\begin{proof}
If the conjunction of two loops is not empty, then there exists a
state such that it has two outgoing transitions, one in each
loop. Hence, this state leads to the occurrence of inconsistency. 
\end{proof}

Due to Lemma~\ref{lem:scc}, finding unstable loops is equivalent to
finding non-trivial strongly connected components (SCCs) when the
system is consistent. There are several SCC identification algorithms
in the literature working on BDD representation of state
spaces~\cite{BGS00,GPP03}. The more efficient one was reported in~\cite{KPQ11}. But
before we apply these algorithms, we could remove states that cannot
reach any unstable loops from the state space in order to speed up the
computation. Those states are identified as $Y$ in the second step of
stability checking in Theorem~\ref{thm:stability}.

\section{Case study} \label{sec:case} 

In this section  we illustrate the use of our
consistency and stability checking techniques on an example scenario which could occur to
a household robot.  
The robot with arms
observes an object rolling across a table. It needs to decide
whether to stop it or allow it to drop off from the table. The
object can be a glass or a light effect. It would be unwise
to stop the object if it is the latter case. The robot may resort to more
accurate sensors to decide how to react. The
model is formalized in a Boolean evolution system
 based on the perception structure in Fig.~\ref{rescyc}.
\begin{enumerate}

\item Feasible sensing possibilities ($B_t$) are:
\begin{itemize}
\item $roll(O)$: object O rolls across table
\item $sensed\_roll(O)$: senses that object O is rolling across table
\item $virt\_real(O)$:  sensed O but there is no real object O
\item $virt\_real\_derived(O)$: derived that light effect moving across table,  there was no real object O sensed
\end{itemize}

In $B_t$, the uncertain sensing events ($U_t$) are $sensed\_roll(O)$
and $virt\_real\_derived(O)$.

\item Action possibilities ($A_t$) are:
\begin{itemize}
\item $stop\_rolling(O)$: stop rolling object by arm
\item $do\_nothing$: remain idle
\end{itemize}

\item Future events predicted ($F_t$) are:
\begin{itemize}
\item $fall(O)$: object O falls 
\item $break(O)$: object O breaks 
\item $useless(O)$: object O is useless
\item $handle(O)$: handling of object O
\item $proper\_observation$: the robot has made the correct observation
\item $proper\_action$: the robot chooses the correct action
\end{itemize}

\item Naive physics rules ($R^P$) are:
\begin{itemize}
\item $\neg  stop\_rolling(O) \land roll(O) \rightarrow fall(O)$: the object will fall if not stopped
\item $fall(O) \rightarrow break(O)$: if the object falls it will break 
\item $stop\_rolling(O) \land roll(O)\rightarrow \neg fall(O) $:  if object is stopped it will not fall
\item $\neg fall(O) \rightarrow \neg break(O)$: if object will not fall then it will not break
\end{itemize}

\item General rules - values and moral consequences rules ($R^B$) are:
\begin{itemize}
\item $virt\_real(O) \land handle(O) \rightarrow wrong\_in\_sensing$
\item $stop\_rolling \rightarrow handle(O)$
\item $break(O) \rightarrow useless(O)$
\item $useless(O) \rightarrow wrong\_in\_action$
\item $\neg break(O) \rightarrow \neg useless(O)$ 
\item $do\_nothing \rightarrow \neg stop\_rolling(O)$
\end{itemize}
 
\end{enumerate}

The robot starts with a simple but fast reasoning cycle by considering
each action individually using observation only. The criteria for
choosing the correct action is to guarantee the following goals.
\begin{itemize}
\item $useless(O)=false$
\item $proper\_action=true$
\item  $proper\_sensing=true$
\end{itemize}

When only one of events $roll(O)$ and $virt\_real(O)$ is true, the
robot can make its decision easily. However, it is difficult to do so
when both events are true. The reasoning process is as follows. 
\begin{enumerate}  
\item {\it Evaluation of action choice 1}:  Goals + $do\_nothing$

This choice results that $proper\_action$ becomes false.

\item {\it Evaluation of action choice 2}:   Goals + $stop\_rolling(O)$

This results inconsistency in the reasoning process as shown in
Fig.~\ref{fig-example}, which demonstrates the evolution of the
value of $proper\_sensing$. 
\begin{figure}[h!]
\centering{\includegraphics[scale=0.7]{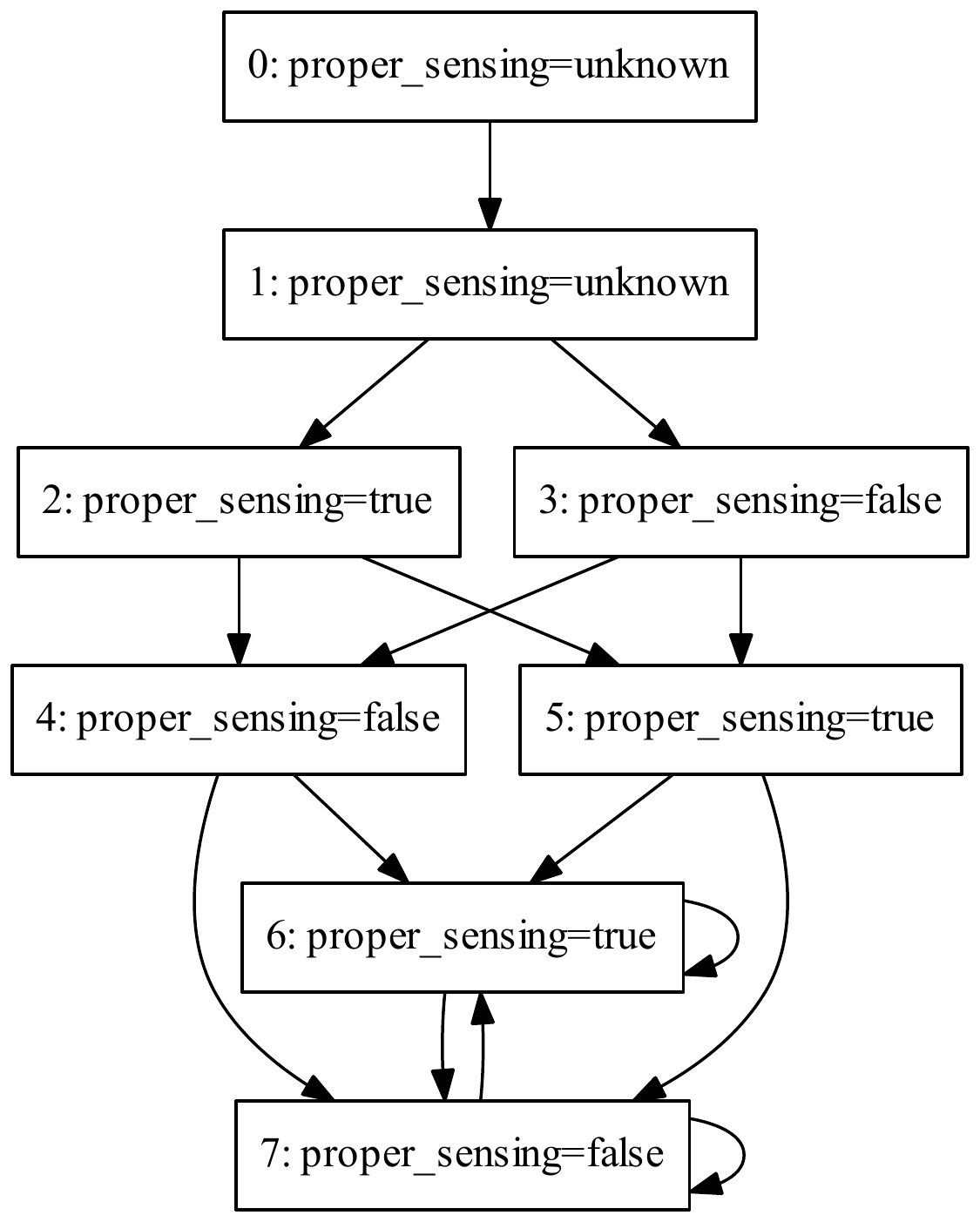}}
\caption{The counterexample showing inconsistency.}
\label{fig-example}
\end{figure}
\end{enumerate}

To resolve the inconsistency, the robot needs to acquire
information from more sensors, which would instantiate the two sensing
events $sensed\_roll(O)$
and $virt\_real\_derived(O)$ in $U_t$ with two extra physical rules. 
\begin{itemize}
\item $\neg sensed\_roll(O) \rightarrow \neg roll(O)$
\item $\neg virt\_real\_derived(O) \rightarrow \neg virt\_real(O)$
\end{itemize}

If these two sensing events do not become true simultaneously, then
the robot can make the correct decision.

Our consistency and stability checking techniques of this kind can be
used in both offline and online modes.  In the online mode,
counterexamples are used to assist the system to acquire more
information, i.e., fixing the uncertain sensing events, or adjusting
the possible actions that can be take, in order to solve inconsistency
or instability problems in a consistency resolution cycle. Our case
study demonstrates an application of the online
mode. Fig.~\ref{rescyc} illustrates the consistency resolution cycle
that can be implemented in agent programs.
\begin{figure}[h!]
\centering{\includegraphics[scale=0.9]{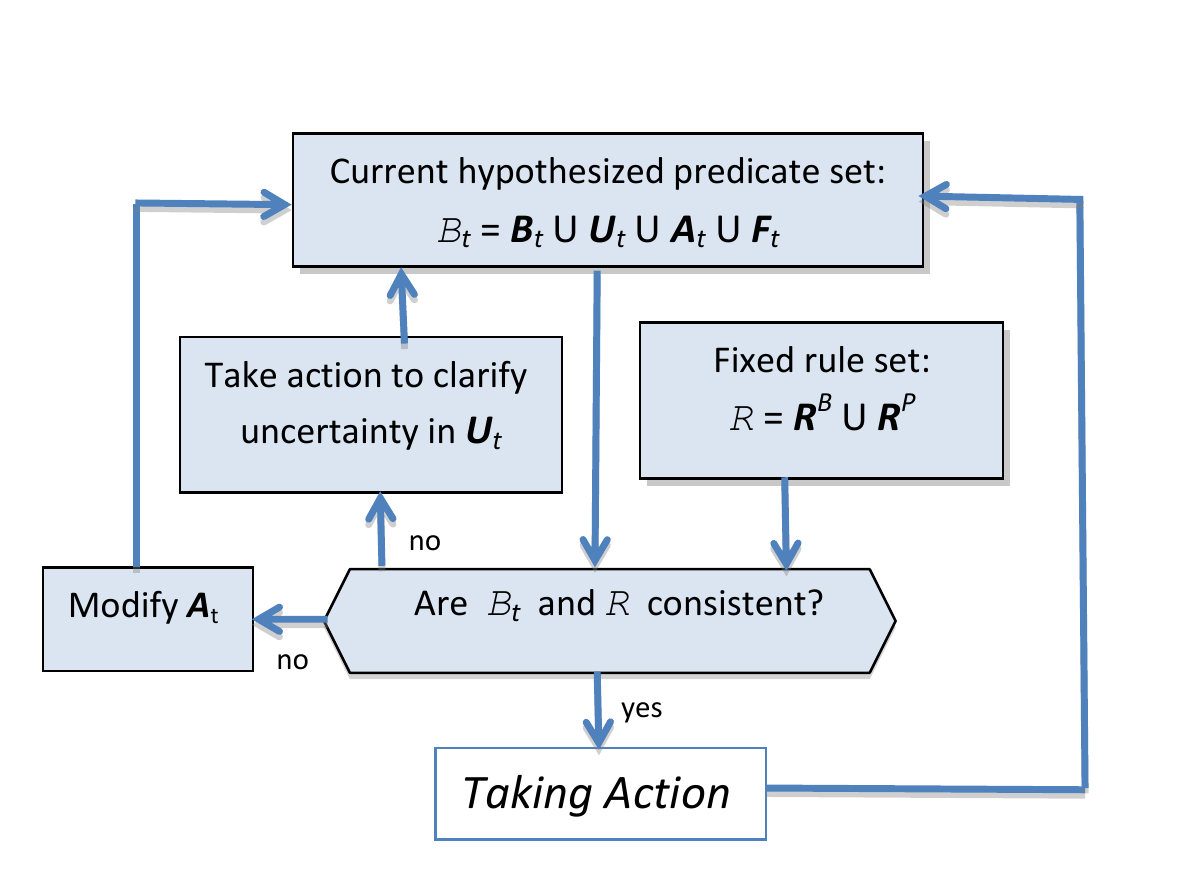}}
\caption{A possible process of inconsistency resolution in agent
  operations. This paper focuses on fast consistency checking with
  counter examples which can guide the modification of $A_t$ and
  actions taken to improve sensory performance to re-evaluate $U_t$. }
\label{rescyc}
\end{figure}

In the offline mode, users can apply the
techniques to their rule-based reasoning system to check consistency
and stability. If a problem is found, users can revise their system
using generated counterexamples as a guidance. For example, If we add
the following extra general rules in the case study,
\begin{itemize}
\item $useless(O) \rightarrow stop\_rolling(O)$
\item $\neg useless(O) \rightarrow \neg stop\_rolling(O)$
\end{itemize}
the robot could be trapped in a circular reasoning process when
$roll(O)$ is true land $virt\_real(O)$ is false because
\begin{equation*}
\begin{split}
stop\_rolling(O) \land roll(O) \longrightarrow & \neg fall(O) \longrightarrow
\neg break(O) \longrightarrow \neg useless(O) \longrightarrow \\ & \neg stop\_rolling(O)
\end{split}
\end{equation*}
and
\begin{equation*}
\begin{split}
\neg stop\_rolling(O) \land roll(O) \longrightarrow & fall(O) \longrightarrow
break(O) \longrightarrow useless(O) \longrightarrow \\ & stop\_rolling(O).
\end{split}
\end{equation*}
This circular reasoning process can be captured by our stability check.

\section{Implementation and performance evaluation} \label{sec:exp}

We have integrated the algorithms in Theorem~\ref{thm:incons1}
and~\ref{thm:reachability} into the model checker MCMAS~\cite{MCMAS}. The
implementation slightly deviated from Theorem~\ref{thm:incons1} in
order to maximize the utility of the internal encoding of the
transition relation in MCMAS. Instead of finding all pair of
conflicting rules, we built a single BDD for each variable $v$. We
first collect all rules $\{ g_1\rightarrow v,\ldots, g_j\rightarrow
v\}$ that set $v$ to $true$, and all rules $\{ g_1'\rightarrow \neg
v,\ldots, g_k'\rightarrow \neg v\}$ that set $v$ to $false$. Second,
we generate a BDD $\mathcal{D}$ representing
\begin{equation}
(g_1\lor \ldots \lor g_j)\land (g_1'\lor \ldots \lor g_k').
\end{equation}
If the BDD is not empty, then there exists a pair of conflicting rules
that might be enabled simultaneously. Searching for a bad state is
done by testing the conjunction of $\mathcal{D}$ and $S$, the set of
reachable states.

To demonstrate
their performance, we applied them to the following example, where
$\mathcal{B}^{known}=\{\mathtt{a}_0, \mathtt{a}_{k, 0},\ldots,
\mathtt{a}_{k, m-1}\}$ and
$\mathcal{B}^{unknown}=\{\mathtt{a}_{1},\ldots,
\mathtt{a}_{m-1}\}$. In the experiment, we fixed $m$ to 32, and
generated a series of models by replicating the group of variables
$\{\mathtt{a}_{k, 0},\ldots, \mathtt{a}_{k, m-1}\}$. In the largest
model, this group has ten copies, i.e., $k$ ranges from $1$ to $10$,
which means the total number of variables is $32 + 32 * 10=352$. Each
variable in $\mathcal{B}^{known}$ requires one BDD variable to encode,
as one BDD variable can represent two values 0 and 1, perfectly
matching Boolean values $false$ and $true$. Each variable in
$\mathcal{B}^{unknown}$ needs two BDD variables because it has three
values. Therefore, the total number of BDD variables in the largest
model is 383.

{\bf Example.} \\
\begin{minipage}{0.52\columnwidth}
\begin{equation*}
\begin{split}
 \mathtt{a}_0 &\rightarrow \mathtt{a}_1\\
 \mathtt{a}_1 &\rightarrow \mathtt{a}_2\\
 ...\\
 \mathtt{a}_{m-2} &\rightarrow \mathtt{a}_{m-1}\\
 \mathtt{a}_{m-1} &\rightarrow \neg \mathtt{a}_0\\
 \neg \mathtt{a}_0 &\rightarrow \neg \mathtt{a}_1\\
 ...\\
 \neg \mathtt{a}_{m-2} &\rightarrow \neg \mathtt{a}_{m-1}\\ 
 \neg \mathtt{a}_{m-1} &\rightarrow \mathtt{a}_0\\ 
\end{split}
\end{equation*}
\end{minipage}
\begin{minipage}{0.4\columnwidth}
\begin{equation*}
\begin{split}
 \mathtt{a}_{k, 0} &\rightarrow \mathtt{a}_{k, 1}\\
 \mathtt{a}_{k, 1} &\rightarrow \mathtt{a}_{k, 2}\\
 ...\\
 \mathtt{a}_{k, m-2} &\rightarrow \mathtt{a}_{k, m-1}\\
 \mathtt{a}_{k, m-1} &\rightarrow \neg \mathtt{a}_{k, 0}\\
 \neg \mathtt{a}_{k, 0} &\rightarrow \neg \mathtt{a}_{k, 1}\\
 ...\\
 \neg \mathtt{a}_{k, m-2} &\rightarrow \neg \mathtt{a}_{k, m-1}\\ 
 \neg \mathtt{a}_{k, m-1} &\rightarrow \mathtt{a}_{k, 0}\\ 
\end{split}
\end{equation*}
\end{minipage}

The experimental results are listed in Table~\ref{table-exp}. For each
model, we present the number of variables and corresponding BDD
variables in parentheses, as well as the number of reachable
states. The time (in second) spent on checking consistency and
stability via the CTL formulae~\ref{eqn:incons} and~\ref{eqn:ctl} are
shown in the two columns in the middle, and the time for the direct
algorithms in Theorem~\ref{thm:incons1} and~\ref{thm:reachability} are
given in the last two columns. The results clearly demonstrates the
huge advantage of using our stability checking algorithm. The
performance of our consistency checking algorithm is also excellent,
given the fact that the CTL formula~\ref{eqn:incons} is quite
efficient already. Note that the time
  spent on building BDD $\mathcal{D}$ for each variable is shown to be below 1ms
  in the penultimate column of the table.

\begin{table}[h] \centering
\caption{\label{table-exp} Experimental results.}
\setlength{\tabcolsep}{1pt}
\begin{tabular}{|c|r||r|r||r|r|}
\hline
\multirow{3}{1.4cm}{\centering Num of \\ variables} &
\multirow{3}{1.4cm}{\centering Num of \\ states} &
\multicolumn{2}{c||}{CTL formulae} &
\multicolumn{2}{c|}{Direct algorithms} \\
\cline{3-6}
& & \multicolumn{1}{c|}{Consistency} & \multicolumn{1}{c||}{Stability}  &
 \multicolumn{1}{c|}{Consistency} & \multicolumn{1}{c|}{Stability}  \\
& & \multicolumn{1}{c|}{time (s)} & \multicolumn{1}{c||}{time (s)} &
 \multicolumn{1}{c|}{time (s)} & \multicolumn{1}{c|}{time (s)} \\
\hline
64 (95) & 5.41166e+11 & 0.049 & 1.884 & $< 0.001$ & 0.001 \\
\hline
96 (127)  & 2.32429e+21 & 0.128 & 4.773 & $< 0.001$ & 0.002 \\
\hline
128 (159) & 9.98275e+30 & 0.248 & 9.073 & $< 0.001$ & 0.003 \\
\hline
160 (191) & 4.28756e+40 & 0.31 & 10.381 & $< 0.001$ & 0.002 \\
\hline
192 (223) & 1.84149e+50 & 0.547 & 19.766 & $< 0.001$ & 0.003 \\
\hline
224 (255) & 7.90915e+59 & 0.867 & 29.341 & $< 0.001$ & 0.008 \\
\hline
256 (287) & 3.39695e+69 & 1.154 & 38.216 & $< 0.001$ & 0.01 \\
\hline
288 (319) & 1.45898e+79 & 0.571 & 19.169 & $< 0.001$ & 0.066 \\
\hline
320 (351) & 6.26627e+88 & 0.849 & 29.308 & $< 0.001$ & 0.062 \\
\hline
352 (383) & 2.69134e+98 & 2.242 & 73.112 & $< 0.001$ & 0.022 \\

\hline
\end{tabular}
\end{table}

\section{Discussion on interleaving semantics} \label{sec:interleaving}

Although synchronous semantics have been applied broadly in practice,
a differently semantics, {\em interleaving semantics}, still finds its
usefulness in case of limited processing power. Interleaving means
that only one enabled rule, which is usually chosen randomly, is
processed at a time.


\begin{definition}[Interleaving semantics]
  The new valuation $\overline{\mathcal{B}}'$ under interleaving
  semantics is the result of applying a rule $r$ in
  $\mathcal{R}|_{\overline{\mathcal{B}}}$ to
  $\overline{\mathcal{B}}$. The rule $r$ is chosen
  non-deterministically. That is, every new value of $b$ in
  $\overline{\mathcal{B}}'$ is defined as follows.
\begin{equation*} \label{eqn:interleaving}
\overline{\mathcal{B}}'(b) = \left\{
 \begin{array}{l l}
   true & \quad \text{if $r = g \rightarrow b$,}\\
   false & \quad \text{if $r = g \rightarrow \neg b$,}\\
   \overline{\mathcal{B}}(b) & \quad \text{otherwise.}
 \end{array} \right.
\end{equation*}
\end{definition}

Under the relaxed inconsistency conditions, a system is guaranteed to
be consistent, if at any given time, only one rule is
processed. Therefore, the first inconsistent condition is not
satisfied any more. However, the interleaving semantics possesses different
characteristics during stability checking. A stable system under
the synchronous semantics may become unstable. Let us re-examine Example 2
using the interleaving semantics. We can construct a path that visits
unstable states as follows. 
For valuation $a=true$, we have
$1???\longrightarrow 11?? \longrightarrow 11?1 \longrightarrow 1101 
\longrightarrow 0101 \longrightarrow 0001 \longrightarrow 0011 
\longrightarrow 0111 \longrightarrow 0101 \longrightarrow\cdots$. The
infinite loop 
\[0101 \longrightarrow 0001 \longrightarrow 0011 
\longrightarrow 0111 \longrightarrow 0101\] makes the system unstable.

However, the infinite loop is quite special in that the rule
\[\mathtt{b} \land \mathtt{c} \rightarrow \neg \mathtt{d}\] is enabled
infinitely often in state $0111$, which is the beginning of the
unstable loop. In practice, such infinite loops rarely happen because of
randomness of choice. Once this rule is executed, the unstable loop is
interrupted, and the system becomes stable. This observation leads to
the introduction of fairness into stability checking. 

Fairness~\cite{BK08} has been studied and applied to many temporal logic-based
verification, including both CTL and LTL. Various types of fairness
constraints have been brought up. Among them, the most popular ones
are {\em unconditional}, {\em strong} and {\em weak} fairness. In this
section, {\em strong} fairness is sufficient to exclude the
above unrealistic/unfair paths. 

\begin{definition}[Strong fairness]
Strong fairness under interleaving semantics requires that,
in every infinite path, an evolution rule has to be executed
infinitely often if it is enabled infinitely often. For transition
systems, strong fairness is composed of a set of fairness constraints,
written as 
\begin{equation}
\bigwedge_i (\Box\Diamond \Phi_i \implies \Box\Diamond \Psi_i),\label{eqn:fair}
\end{equation}
where each constraints $\Box\Diamond \Phi_i \implies \Box\Diamond
\Psi_i$ specifies that if $\Phi_i$ occurs infinitely often, then
$\Psi_i$ has to occurs infinitely often as well.
\end{definition}

Strong fairness rules out unrealistic evolution paths, where some
enabled rules are consistently ignored. Therefore, it allows more
systems   evaluated as stable. For Example 2, we only need one
fairness constraint:
\[ \Box\Diamond (\mathtt{B}_b \land \mathtt{B}_c) \implies
\Box\Diamond \neg\mathtt{B}_d.\]
This example suggests that the generation of a fairness constraint from a
rule can be straightforward, which can be achieved by following the
syntactic form of the rule.

However, strong fairness still cannot prevent some stable
system under synchronous semantics from being unstable. The following
example demonstrates an unstable system under strong
fairness. In this example, $\mathcal{B}^{known}=\{a\}$ and
$\mathcal{B}^{unknown}=\{b,c,d,e\}$

{\bf Example 3.}
\begin{equation*}
\begin{split}
 \mathtt{a} &\rightarrow \mathtt{b} \land \mathtt{d} \land \mathtt{e}\\
 \mathtt{b} \land \mathtt{d} &\rightarrow \mathtt{c} \land \neg \mathtt{a}\\
 \mathtt{c} \land \mathtt{d} &\rightarrow \neg \mathtt{b}\\
 \neg \mathtt{b} \land \mathtt{d} &\rightarrow \neg \mathtt{c}\\
 \neg \mathtt{c} \land \mathtt{d} &\rightarrow  \mathtt{b}\\
 \mathtt{b} \land \mathtt{c} \land \mathtt{e} &\rightarrow \neg \mathtt{d}\\
 \neg \mathtt{b} \land \neg \mathtt{c} &\rightarrow \neg \mathtt{e}
\end{split}
\end{equation*}

For the initial valuation $a=true$, we have
$1????\longrightarrow 11??? \longrightarrow 11?1? \longrightarrow 11?11 
\longrightarrow 11111 \longrightarrow 01111 \longrightarrow 00111 
\longrightarrow 00011 \longrightarrow 00010 \longrightarrow 01010
\longrightarrow 01110 \longrightarrow 00110 \longrightarrow 00010 \cdots$. The
unstable loop 
\[00010 \longrightarrow 01010
\longrightarrow 01110 \longrightarrow 00110 \longrightarrow 00010\]
cannot be broken because the only rule that can break it, i.e., 
\[ \mathtt{b} \land \mathtt{c} \land \mathtt{e} \rightarrow \neg \mathtt{d}\]
 is disabled in state $00010$.

 Enforcing strong fairness in the verification of CTL formulas can be
 transformed into finding strongly connected components (SCCs), which
 in turn can be implemented using graphic algorithms~\cite{BK08}. As
 we do not consider inconsistency for the interleaving semantics, the
 stability can be checked by LTL model checkers, such as
 SPIN~\cite{spin} and NuSMV. The verification of an LTL formula $f$ under strong
 fairness can be achieved directly by checking a combined LTL formula 
\begin{equation}
fair \implies f,
\end{equation}
where $fair$ is of the form of Formula (\ref{eqn:fair}). For stability
checking the $f$ is taken as in~(\ref{eqn:ltl}). Note that the algorithm
in Theorem~\ref{thm:reachability} does not work here because multiple
successor states do not mean inconsistency any more. SPIN
uses explicit model checking techniquesfor the verification. It
requires that every initial valuation has to be enumerated explicitly,
which is very inefficient. NuSMV adopts the method in~\cite{CGH97} to check LTL formulae
symbolically via BDDs, which can be more efficient for our purpose.

Now the question is how we identify rules that need to be guaranteed
for execution by strong fairness. Human guidance on the selection of
rules would be ideal. When it is not available, we need to find a
solution to allow automatic selection.
A simple method is to put all
rules under the protection of fairness. This solution does not request the
modification of an existing model checker. However, it only works for a
small rule set. A large number of rules would render $fair$ a large
LTL formula containing  equal number of constraints as the number  of
rules. 

An alternative solution utilises a sequence of verification to search
for a fair unstable loop. Starting with no fairness constraints, we
check Formula~(\ref{eqn:ltl}) solely on the converted transition
system. If the result is $false$, which means the system may be
unstable, then we ask the model checker to generate a counterexample,
i.e. an unstable loop. We examine each state in the loop to look for
enabled rules that are never executed in the loop. If no such rule is
found, then the system is unstable under strong fairness. Otherwise,
we put the unfairly treated rules into $fair$ and re-start the
verification. This process is carried out iteratively until the system
is proven to be stable or unstable under strong fairness. Although the
idea of this solution is not complex, its implementation requires to
build an extension of a model checker, e.g., NuSMV, which is not a
trivial task. Further, its performance would be degraded when the
number of iterations increases.

\section{Conclusion} \label{sec:concl}

This paper has solved the problem of efficiency for logical
consistency checks of robots by adopting symbolic model checking based
on binary decision diagrams.  In addition to specifying stability and
consistency as CTL and LTL formulas, also efficient symbolic
algorithms have been applied to speed up consistency and stability
checking.  Timely decision making by robots can be vital in many
safety critical applications. The most basic task they need to check is
the consistency of their assumptions and their rules. Speed of computation hence
affects quality and safety of robots. As a first step towards application of our approach,
we have embedded it within the framework LISA~\cite{ecc16lisa,taros16} 
for reasoning of robots.

Further direct use of the techniques is in rule-based reasoning
systems before they are deployed on robots.  The counter-examples, which
can generated by the techniques  presented, can demonstrate the reasons for possible
violation, which can help software developers revising their
designs. Sometimes it can be time-consuming to modify the design of a
complex system, possibly the violation is tolerable or is very rare
during run-time. In these cases counter-examples can be used as a
guide to correct the reasoning of a robot while in action.
 
Future work in this research area can target the implementation of our
approach in practical programming ~\cite{jsce_000,review2011}, and to
aid finding solutions to making inconsistent/unstable systems
consistent and stable. Based on the results an iterative design
process can be defined to enable a programmer to control an agent's
decision making.  Our plans are also to integrate consistency checks in LISA~\cite{ecc16lisa,taros16}
 into the control code of Unmanned Aerial Vehicles (UAVs) and Unmanned Ground Vehicles (UGVs) for 
practical application.

\section*{Acknowledgements}
This work was supported by the EPSRC project EP/J011894/2.







\begin{thebibliography}{10}
\expandafter\ifx\csname url\endcsname\relax
  \def\url#1{\texttt{#1}}\fi
\expandafter\ifx\csname urlprefix\endcsname\relax\def\urlprefix{URL }\fi
\expandafter\ifx\csname href\endcsname\relax
  \def\href#1#2{#2} \def\path#1{#1}\fi

\bibitem{roblaws}
Laws of robotics, http://en.wikipedia.org/wiki/Laws\_of\_robotics (2013).

\bibitem{ABB05}
K.~Arkoudas, S.~Bringsjord, P.~Bello, Toward ethical robots via mechanized
  deontic logic, Tech. rep., AAAI Fall Symposium on Machine Ethics, AAAI
  (2005).

\bibitem{MW04}
S.~Maghsoudi, I.~Watson, Epistemic logic and planning, in: Proceedings of
  Knowledge-Based Intelligent Information and Engineering Systems (KES'04),
  Vol. 3214 of Lecture Notes in Computer Science, Springer, 2004, pp. 36--45.

\bibitem{SW01}
M.~Shanahan, M.~Witkowski, High-level robot control through logic, in:
  Proceedings of Intelligent Agents VII Agent Theories Architectures and
  Languages, Vol. 1986 of Lecture Notes in Computer Science, Springer, 2001,
  pp. 104--121.

\bibitem{SPBK08}
M.~Singh, D.R.Parhi, S.Bhowmik, S.K.Kashyap, Intelligent controller for mobile
  robot: Fuzzy logic approach, in: Proceedings of International Association for
  Computer Methods and Advances in Geomechanics (IACMAG'08), 2008, pp.
  1755--1762.

\bibitem{TALF09}
C.~R. Torres, J.~M. Abe, G.~Lambert-Torres, J.~I. D.~S. Filho, H.~G. Martins,
  Autonomous mobile robot emmy iii, in: Proceedings of New Advances in
  Intelligent Decision Technologies, Vol. 199 of Studies in Computational
  Intelligence, Springer, 2009, pp. 317--327.

\bibitem{TBS06}
F.~W. Trevizan, L.~N. de~Barrosand Fl\'{a}vio S. Corr\^{e}a~da Silva, Designing
  logic-based robots, Inteligencia Artificial, Revista Iberoamericana de
  Inteligencia Artificial. 31 (2006) 11--22.

\bibitem{Vranas08}
P.~B.~M. Vranas, New foundations for imperative logic i: Logical connectives,
  consistency, and quantifiers, No\^{u}s 42~(4) (2008) 529--572.

\bibitem{MihaylovaCSGPG14}
L.~Mihaylova, A.~Carmi, F.~Septier, A.~Gning, S.~K. Pang, S.~J. Godsill,
  Overview of bayesian sequential monte carlo methods for group and extended
  object tracking, Digital Signal Processing 25 (2014) 1--16.

\bibitem{jason}
R.~H. Bordini, J.~F. Hubner, Jason {W}ebsite,
  http://jason.sourceforge.net/wp/description/, 15/07/2013 (2013).

\bibitem{2apl}
M.~Dastani, B.~Testerink, M.~Cap, T.~Behrans, J.-J. Meyei, {2APL} {W}ebsite,
  http://apapl.sourceforge.net/ (2013).

\bibitem{jade}
M.~Nikraz1a, G.~Caireb, P.~A. Bahri, Jade {W}ebsite,
  http://jade.tilab.com/doc/, 15/07/2013 (2013).

\bibitem{nlp12}
N.~K. Lincoln, S.~M. Veres, Natural language programming of complex robotic
  {BDI} agents, Journal of Intelligent \& Robotic Systems (2012) 1--20.

\bibitem{review2011}
S.~M. Veres, N.~K. Lincoln, L.~Molnar, C.~Morice, Autonomous vehicle control
  systems - a review of decision making, Journal of Systems and Control
  Engineering 225~(3) (January 2011) 155--195.

\bibitem{sathist}
J.~Franco, J.~Martin, A history of satisfiability, in: Handbook of
  Satisfiability, IOS Press, 2009, pp. 3--65.

\bibitem{MA92}
W.~Moser, K.-P. Adlassnig, Consistency checking of binary categorical
  relationships in a medical knowledge base, Artificial Intelligence in
  Medicine 4~(5) (1992) 389--407.

\bibitem{4QL}
J.~Maluszynski, A.~Szalas, \href{http://arxiv.org/abs/1011.5105}{Logical
  foundations and complexity of {4QL}, a query language with unrestricted
  negation}, CoRR abs/1011.5105.
\newline\urlprefix\url{http://arxiv.org/abs/1011.5105}

\bibitem{Kauffman69}
S.~Kauffman, Homeostasis and differentiation in random genetic control
  networks, Nature 224~(5215) (1969) 177--178.

\bibitem{Thangarajah02}
J.~Thangarajah, L.~Padgham, J.~Harland, Representation and reasoning for goals
  in bdi agents, Aust. Comput. Sci. Commun. 24~(1) (2002) 259--265.

\bibitem{MorrealeBFPCCCG06}
V.~Morreale, S.~Bonura, G.~Francaviglia, M.~Puccio, F.~Centineo, G.~Cammarata,
  M.~Cossentino, S.~Gaglio, {PRACTIONIST:} a framework for developing {BDI}
  agent systems, in: Proceedings of the 7th {WOA} 2006 Workshop, From Objects
  to Agents, Vol. 204 of {CEUR} Workshop Proceedings, CEUR-WS.org, 2006, pp.
  187--194.

\bibitem{cgp99}
E.~M. Clarke, O.~Grumberg, D.~A. Peled, Model Checking, The {MIT} Press,
  Cambridge, Massachusetts, 1999.

\bibitem{Burch+92a}
J.~R. Burch, E.~M. Clarke, K.~L. McMillan, D.~L. Dill, L.~J. Hwang, Symbolic
  model checking: $10^{20}$ states and beyond, Information and Computation
  98~(2) (1992) 142--170.

\bibitem{KLQ10}
M.~Kwiatkowska, A.~Lomuscio, H.~Qu, Parallel model checking for temporal
  epistemic logic, in: Proc of ECAI'10, IOS Press, 2010, pp. 543--548.

\bibitem{CDLQ09a}
M.~Cohen, M.~Dam, A.~Lomuscio, H.~Qu, A symmetry reduction technique for model
  checking temporal-epistemic logic, in: Proc of IJCAI'09, 2009, pp. 721--726.

\bibitem{CDLQ09b}
M.~Cohen, M.~Dam, A.~Lomuscio, H.~Qu, A data symmetry reduction technique for
  temporal-epistemic logic, in: Proc of ATVA'09, Vol. 5799 of LNCS, Springer,
  2009, pp. 69--83.

\bibitem{LQR10}
A.~Lomuscio, H.~Qu, F.~Russo, Automatic data-abstraction in model checking
  multi-agent systems, in: Proc of MoChArt'10, Vol. 6572 of LNCS, Springer,
  2010, pp. 52--68.

\bibitem{QV14}
H.~Qu, S.~M. Veres, On efficient consistency checks by robots, in: Proc.
  ECC'14, 2014, pp. 336--343.

\bibitem{TabuadaPappas06:TAC}
P.~Tabuada, G.~J. Pappas, {Linear Time Logic Control of Discrete-Time Linear
  Systems}, IEEE Transactions on Automatic Control 51~(12) (2006) 1862--1877.

\bibitem{KloetzerBelta08:TAC}
M.~Kloetzer, C.~Belta, {A Fully Automated Framework for Control of Linear
  Systems From Temporal Logic Specifications}, IEEE Transactions on Automatic
  Control 53~(1) (2008) 287--297.

\bibitem{MCMAS}
A.~Lomuscio, H.~Qu, F.~Raimondi, {MCMAS}: An open-source model checker for the
  verification of multi-agent systems, International Journal on Software Tools
  for Technology Transfer.

\bibitem{lincoln2013}
N.~K. Lincoln, S.~M. Veres, Natural language programming of complex robotic
  {BDI} agents, Intelligent and Robotic Systems 71~(2) (2013) 211--230.

\bibitem{wooldridge2002}
M.~Wooldridge, An Introduction to MultiAgent Systems, Wiley, Chichester, 2002.

\bibitem{veres2011}
S.~M. Veres, L.~Molnar, N.~K. Lincoln, C.~Morice, Autonomous vehicle control
  systems - a review of decision making 225~(2) (2011) 155--195.

\bibitem{ecc16lisa}
P.~{Izzo}, H.~{Qu}, S.~M. {Veres}, {Reducing complexity of autonomous control
  agents for verifiability}, arXiv:1603.01202[cs.SY].

\bibitem{taros16}
M.~Y. Hazim, H.~{Qu}, S.~M. {Veres}, Testing, verifiation and improvements of
  timeliness in {ROS} processes, in: Proc. of TAROS'16, 2016, to appear.

\bibitem{Bryant-bdd}
R.~Bryant, Graph-based algorithms for boolean function manipulation, {IEEE}
  Transaction on Computers 35~(8) (1986) 677--691.

\bibitem{NuSMV}
A.~Cimatti, E.~M. Clarke, E.~Giunchiglia, F.~Giunchiglia, M.~Pistore,
  M.~Roveri, R.~Sebastiani, A.~Tacchella, Nusmv 2: An opensource tool for
  symbolic model checking, in: Proceedings of CAV 2002, Vol. 2404 of Lecture
  Notes in Computer Science, Springer, 2002, pp. 359--364.

\bibitem{fhmv}
R.~Fagin, J.~Y. Halpern, Y.~Moses, M.~Y. Vardi, Reasoning about Knowledge, MIT
  Press, Cambridge, 1995.

\bibitem{ClarkeES86}
E.~M. Clarke, E.~A. Emerson, A.~P. Sistla, Automatic verification of
  finite-state concurrent systems using temporal logic specifications, ACM
  Trans. Program. Lang. Syst. 8~(2) (1986) 244--263.

\bibitem{Pnueli77}
A.~Pnueli, The temporal logic of programs, in: Proc of 18th Annual Symposium on
  Foundations of Computer Science, IEEE Computer Society, 1977, pp. 46--57.

\bibitem{BGS00}
R.~Bloem, H.~N. Gabow, F.~Somenzi, An algorithm for strongly connected
  component analysis in log symbolic steps, in: Proc. FMCAD'00, Vol. 1954 of
  LNCS, Springer, 2000, pp. 37--54.

\bibitem{GPP03}
R.~Gentilini, C.~Piazza, A.~Policriti, Computing strongly connected components
  in a linear number of symbolic steps, in: Proc. SODA'03, 2003, pp. 573--582.

\bibitem{KPQ11}
M.~Kwiatkowska, D.~Parker, H.~Qu, Incremental quantitative verification for
  {Markov} decision processes, in: Proc. DSN-PDS'11, IEEE, 2011, pp. 359--370.

\bibitem{BK08}
C.~Baier, J.-P. Katoen, Principles of model checking, MIT Press, 2008.

\bibitem{spin}
G.~J. Holzmann, The SPIN Model Checker - primer and reference manual,
  Addison-Wesley, 2004.

\bibitem{CGH97}
E.~M. Clarke, O.~Grumberg, K.~Hamaguchi, Another look at {LTL} model checking,
  Formal Methods in System Design 10~(1) (1997) 47--71.

\bibitem{jsce_000}
S.~M. Veres, Knowledge of machines: review and forward look, Journal of Systems
  and Control Engineering 226~(1) (2012) 3--10.

\end{thebibliography}


\end{document}